\newtheorem{prop}{Proposition}
\title{Unified Interpretation of Softmax Cross-Entropy and Negative Sampling: \\With Case Study for Knowledge Graph Embedding}
\author{Hidetaka Kamigaito \\
  Tokyo Institute of Technology \\
  \texttt{kamigaito@lr.pi.titech.ac.jp} \\\And
  Katsuhiko Hayashi \\
  Gunma University, RIKEN AIP \\\
  \texttt{khayashi0201@gmail.com} \\}
\date{}
\begin{document}
\maketitle
\begin{abstract}
In knowledge graph embedding, the theoretical relationship between the softmax cross-entropy and negative sampling loss functions has not been investigated. This makes it difficult to fairly compare the results of the two different loss functions. We attempted to solve this problem by using the Bregman divergence to provide a unified interpretation of the softmax cross-entropy and negative sampling loss functions. Under this interpretation, we can derive theoretical findings for fair comparison. Experimental results on the FB15k-237 and WN18RR datasets show that the theoretical findings are valid in practical settings.
\end{abstract}

\section{Introduction}
Negative Sampling (NS) \cite{DBLP:journals/corr/MikolovSCCD13} is an approximation of softmax cross-entropy (SCE). Due to its efficiency in computation cost, NS is now a fundamental loss function for various Natural Language Processing (NLP) tasks such as used in word embedding \cite{DBLP:journals/corr/MikolovSCCD13}, language modeling \cite{melamud-etal-2017-simple}, contextualized embedding \cite{clark-etal-2020-pre,Clark2020ELECTRA:}, and knowledge graph embedding (KGE) \cite{DBLP:conf/icml/TrouillonWRGB16}. Specifically, recent KGE models commonly use NS for training. Considering the current usages of NS, we investigated the characteristics of NS by mainly focusing on KGE from theoretical and empirical aspects.

First, we introduce the task description of KGE. A knowledge graph is a graph that describes the relationships between entities. It is an indispensable resource for knowledge-intensive NLP applications such as dialogue \cite{moon-etal-2019-opendialkg} and question-answering \cite{10.1145/3038912.3052675} systems. However, to create a knowledge graph, it is necessary to consider a large number of entity combinations and their relationships, making it difficult to construct a complete graph manually. Therefore, the prediction of links between entities is an important task.

Currently, missing relational links between entities are predicted using a scoring method based on KGE \cite{10.5555/2900423.2900470}. With this method, a score for each link is computed on vector space representations of embedded entities and relations. We can train these representations through various loss functions. The SCE \cite{kadlec-etal-2017-knowledge} and NS \cite{DBLP:conf/icml/TrouillonWRGB16} loss functions are commonly used for this purpose.

Several studies \cite{Ruffinelli2020You,ali2020pykeen} have shown that link-prediction performance can be significantly improved by choosing the appropriate combination of loss functions and scoring methods. However, the relationship between the SCE and NS loss functions has not been investigated in KGE. Without a basis for understanding the relationships among different loss functions, it is difficult to make a fair comparison between the SCE and NS results.

We attempted to solve this problem by using the Bregman divergence \cite{BREGMAN1967200} to provide a unified interpretation of the SCE and NS loss functions. Under this interpretation, we can understand the relationships between SCE and NS in terms of the model’s predicted distribution at the optimal solution, which we called the \textbf{objective distribution}. By deriving the objective distribution for a loss function, we can analyze different loss functions, the objective distributions of which are identical under certain conditions, from a unified viewpoint.

We summarize our theoretical findings not restricted to KGE as follows:
\begin{itemize}
\item The objective distribution of NS with uniform noise (NS w/ Uni) is equivalent to that of SCE.
\item The objective distribution of self-adversarial negative sampling (SANS) \cite{DBLP:journals/corr/abs-1902-10197} is quite similar to SCE with label smoothing (SCE w/ LS) \cite{label1}.
\item NS with frequency-based noise (NS w/ Freq) in word2vec\footnote{The word2vec uses unigram distribution as the frequency-based noise.} has a smoothing effect on the objective distribution.
\item SCE has a property wherein it more strongly fits a model to the training data than NS.
\end{itemize}

To check the validity of the theoretical findings in practical settings, we conducted experiments on the FB15k-237~\cite{toutanova-chen-2015-observed} and WN18RR~\cite{dettmers2018conve} datasets. The experimental results indicate that
\begin{itemize}
\item The relationship between SCE and SCE w/ LS is also similar to that between NS and SANS in practical settings.
\item NS is prone to underfitting because it weakly fits a model to the training data compared with SCE.
\item SCE causes underfitting of KGE models when their score function has a bound.
\item Both SANS and SCE w/ LS perform well as pre-training methods.
\end{itemize}

The structure of this paper is as follows: Sec.~2 introduces SCE and Bregman divergence; Sec.~3 induces the objective distributions for NS; Sec.~4 analyzes the relationships between SCE and NS loss functions; Sec.~5 summarizes and discusses our theoretical findings; Sec.~6 discusses empirically investigating the validity of the theoretical findings in practical settings; Sec.~7 explains the differences between this paper and related work; and Sec.~8 summarizes our contributions. Our code will be available at \url{https://github.com/kamigaito/acl2021kge}

\section{Softmax Cross Entropy and Bregman Divergence}
\label{sec:bd}
\subsection{SCE in KGE}
We denote a link representing a relationship $r_{k}$ between entities $e_{i}$ and $e_{j}$ in a knowledge graph as $(e_{i},r_{k},e_{j})$. In predicting the links from given queries $(e_{i},r_{k},?)$ and $(?,r_{k},e_{j})$, the model must predict entities corresponding to each $?$ in the queries. We denote such a query as $x$ and the entity to be predicted as $y$. By using the softmax function, the probability $p_{\theta}(y|x)$ that $y$ is predicted from $x$ with the model parameter $\theta$ given a score function $f_{\theta}(x,y)$ is expressed as follows:
\begin{equation}
 p_{\theta}(y|x)=\frac{
\exp{(f_{\theta}(x,y))}
}{
\sum_{y'\in Y}{
\exp{(f_{\theta}(x,y'))}
}
},
\label{eq:softmax}
\end{equation}

where $Y$ is the set of all predictable entities. We further denote the pair of an input $x$ and its label $y$ as $(x,y)$. Let $D=\{(x_{1},y_{1}),\cdots,(x_{|D|},y_{|D|})\}$ be observed data that obey a distribution $p_{d}(x,y)$.

\subsection{Bregman Divergence}
Next, we introduce the Bregman divergence. Let $\Psi(z)$ be a differentiable function; the Bregman divergence between two distributions $f$ and $g$ is defined as follows:
\begin{equation}
 d_{\Psi(z)}(f,g) = \Psi(f) - \Psi(g) - \nabla\Psi(g)^{T}(f-g).
\end{equation}

We can express various divergences by changing $\Psi(z)$. To take into account the divergence on the entire observed data, we consider the expectation of $d_{\Psi}(f,g)$: $B_{\Psi(z)}(f, g) = \sum_{x,y} d_{\Psi(z)}(f(y|x), g(y|x))p_{d}(x,y)$. To investigate the relationship between a loss function and learned distribution of a model at an optimal solution of the loss function, we need to focus on the minimization of $B_{\Psi(z)}$. \newcite{10.5555/3020548.3020582} showed that $B_{\Psi(z)}(f, g) = 0$ means that $f$ equals $g$ almost everywhere when $\Psi(z)$ is a differentiable strictly convex function in its domain. Note that all $\Psi(z)$ in this paper satisfy this condition. Accordingly, by fixing $f$, minimization of $B_{\Psi(z)}(f, g)$ with respect to $g$ is equivalent to minimization of
\begin{equation}
\scalebox{0.89}{$\begin{aligned}
 &\tilde{B}_{\Psi(z)}(f,g)\\
 =&\sum_{x,y} \left[-\Psi(g) + \nabla\Psi(g)^{T}g - \nabla\Psi(g)^{T}f \right]p_{d}(x,y)
\end{aligned}$}\label{eq:bd:reform}
\end{equation}
We use $\tilde{B}_{\Psi}(f,g)$ to reveal a learned distribution of a model at optimal solutions for the SCE and NS loss functions.

\subsection{Derivation of SCE}
For the latter explanations, we first derive the SCE loss function from Eq. (\ref{eq:bd:reform}). We denote a probability for a label $y$ as $p(y)$, vector for all $y$ as $\mathbf{y}$, vector of probabilities for $\mathbf{y}$ as $p(\mathbf{y})$, and dimension size of $\mathbf{z}$ as $len(\mathbf{z})$. In Eq. (\ref{eq:bd:reform}), by setting $f$ as $p_{d}(\mathbf{y}|x)$ and $g$ as $p_{\theta}(\mathbf{y}|x)$ with $\Psi(\mathbf{z})=\sum_{i=1}^{len(\mathbf{z})}z_{i}\log{z_{i}}$ \cite{JMLR:v6:banerjee05b}, we can derive the SCE loss function as follows:
\begin{align}
\!\!&\tilde{B}_{\Psi(\mathbf{z})}(p_{d}(\mathbf{y}|x),p_{\theta}(\mathbf{y}|x))\nonumber\\
\!\!=& - \sum_{x,y} \left[ \sum_{i=1}^{|Y|}p_{d}(y_{i}|x)\log{p_{\theta}(y_i|x)}\right]p_{d}(x,y)\label{eq:ce:middle}\\
\!\!=& - \frac{1}{|D|}\sum_{(x,y) \in D}\log{p_{\theta}(y|x)}.\label{eq:ce:final}
\end{align}
This derivation indicates that $p_{\theta}(y|x)$ converges to the observed distribution $p_{d}(y|x)$ through minimizing $B_{\Psi(\mathbf{z})}(p_{d}(\mathbf{y}|x),p_{\theta}(\mathbf{y}|x))$ in the SCE loss function. We call the distribution of $p_{\theta}(\mathbf{y}|x)$ when $B_{\Psi(\mathbf{z})}$ equals zero an \textbf{objective distribution}.

\section{Objective Distribution for Negative Sampling Loss}
\label{sec:ns}
We begin by providing a definition of NS and its relationship to the Bregman divergence, following the induction of noise contrastive estimation~(NCE) from the Bregman divergence that was established by~\citet{10.5555/3020548.3020582}. We denote $p_{n}(y|x)$ to be a known non-zero noise distribution for $y$ of a given $x$. Given $\nu$ noise samples from $p_{n}(y|x)$ for each $(x,y)\in D$, NS estimates the model parameter $\theta$ for a distribution $G(y|x;\theta) = \exp(-f_{\theta}(x,y))$.

By assigning to each $(x,y)$ a binary class label $C$: $C = 1$ if $(x,y)$ is drawn from observed data $D$ following a distribution $p_{d}(x,y)$ and $C = 0$ if $(x,y)$ is drawn from a noise distribution $p_{n}(y|x)$, we can model the posterior probabilities for the classes as follows:
\vspace{-2mm}
\begin{equation}
\begin{aligned}
 p(C=1,y|x;\theta) &= \frac{1}{1 \!+\! \exp(-f_{\theta}(x,y))} \\
 &=\frac{1}{1+G(y|x;\theta)}, 
 \\
 p(C=0,y|x;\theta) &= 1\!-\!p(C=1,y|x;\theta) \\
 &=\frac{G(y|x;\theta)}{1+G(y|x;\theta)}. 
\end{aligned}\nonumber
\end{equation}
The objective function $\ell^{NS}(\theta)$ of NS is defined as follows:
\begin{align}
 \ell^{NS}(\theta)= &-\frac{1}{|D|}\sum_{(x,y) \in D} \Bigl[\log(P(C=1,y|x;\theta)) \bigr. \nonumber\\
 &\bigl.+\sum_{i=1,y_{i}\sim p_n}^{\nu}\log(P(C=0,y_{i}|x;\theta))\Bigr].\label{eq:ns:loss}
\end{align}
By using the Bregman divergence, we can induce the following propositions for $\ell^{NS}(\theta)$.
\begin{prop}
\label{prop:ns:psi}
$\ell^{NS}(\theta)$ can be induced from Eq. (\ref{eq:bd:reform}) by setting $\Psi(z)$ as:
\begin{equation}
\Psi(z) = z\log(z)-(1+z)\log(1+z).
\end{equation}
\end{prop}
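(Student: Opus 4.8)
The plan is to mimic the structure of the SCE derivation in Eqs.~(\ref{eq:ce:middle})--(\ref{eq:ce:final}), but with scalar-valued $f$ and $g$ attached to each pair $(x,y)$ rather than the full label vectors. Concretely, I would set $g = G(y|x;\theta) = \exp(-f_{\theta}(x,y))$ and choose $f = \nu\, p_{n}(y|x)/p_{d}(y|x)$, and then show that substituting the given $\Psi$ into Eq.~(\ref{eq:bd:reform}) reproduces $\ell^{NS}(\theta)$ of Eq.~(\ref{eq:ns:loss}) up to the Monte-Carlo replacement of expectations by sampled noise terms. The motivation for this choice of $f$ is that the noise contribution in $\ell^{NS}$ is an expectation under $p_{n}$, whereas the outer sum in Eq.~(\ref{eq:bd:reform}) is weighted by $p_{d}$, so $f$ must carry the density ratio that converts one into the other.

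The two computations that do all the work are the gradient and the ``self-term'' of $\Psi$. First I would differentiate to get
\[
\nabla\Psi(g) = \log g - \log(1+g) = \log\frac{g}{1+g},
\]
and then combine it with $\Psi(g)$ to obtain the clean identity
\[
-\Psi(g) + \nabla\Psi(g)\,g = \log(1+g).
\]
These are exactly the ingredients needed, because $\log(1+g) = -\log p(C=1,y|x;\theta)$ and $\log\frac{g}{1+g} = \log p(C=0,y|x;\theta)$, so the integrand of $\tilde{B}_{\Psi(z)}$ already has the form of the two logistic terms appearing in $\ell^{NS}$.

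With these in hand, substituting into Eq.~(\ref{eq:bd:reform}) gives
\[
\tilde{B}_{\Psi(z)}(f,g) = \sum_{x,y}\Bigl[\log(1+g) - f\log\tfrac{g}{1+g}\Bigr]p_{d}(x,y).
\]
The first term is $-\sum_{x,y}p_{d}(x,y)\log p(C=1,y|x;\theta)$, which is the positive-sample part of $\ell^{NS}$. For the second term I would use $p_{d}(x,y)\,p_{n}(y|x)/p_{d}(y|x) = p_{d}(x)\,p_{n}(y|x)$ so that the chosen $f$ turns it into $-\nu\sum_{x}p_{d}(x)\sum_{y}p_{n}(y|x)\log p(C=0,y|x;\theta)$, i.e.\ $\nu$ times the expected log-probability of the negative class under the noise distribution. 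Finally, replacing the two population expectations by their empirical estimates over $D$ and over the $\nu$ sampled negatives recovers Eq.~(\ref{eq:ns:loss}) exactly.

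The step I expect to be the main obstacle is the correct bookkeeping of the summation/expectation structure in the second term: one must notice that the negative-sample sum in $\ell^{NS}$ is an expectation under $p_{n}(\cdot|x)$ while Eq.~(\ref{eq:bd:reform}) integrates against $p_{d}(x,y)$, and that the only way to reconcile the two inside the fixed Bregman template is to absorb the factor $\nu\,p_{n}(y|x)/p_{d}(y|x)$ into $f$. Everything else is the routine algebra of simplifying $\Psi$ and identifying the logistic terms; the conceptual content is entirely in pinning down what $f$ and $g$ must be.
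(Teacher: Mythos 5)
Your proposal is correct and matches the paper's own proof in all essentials: the same choices $f=\nu p_{n}(y|x)/p_{d}(y|x)$, $g=G(y|x;\theta)$, the same identity $-\Psi(g)+\nabla\Psi(g)g=\log(1+g)$, and the same use of $p_{d}(x,y)/p_{d}(y|x)=p_{d}(x)$ to reconcile the $p_{d}$-weighted sum with the noise expectation. The only difference is direction — you expand the Bregman template into $\ell^{NS}(\theta)$ while the paper massages $\ell^{NS}(\theta)$ into the Bregman form — which is the same argument read backwards.
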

\begin{prop}
\label{prop:ns:g}
When $\ell^{NS}(\theta)$ is minimized, the following equation is satisfied:
\begin{equation}
 G(y|x;\theta) = \frac{\nu p_{n}(y|x)}{p_{d}(y|x)}.
\label{eq:ns:obj-true}
\end{equation}
\end{prop}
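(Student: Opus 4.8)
The plan is to obtain Proposition~\ref{prop:ns:g} as an almost immediate consequence of Proposition~\ref{prop:ns:psi} together with the strict-convexity fact quoted above. Proposition~\ref{prop:ns:psi} tells us that, in its population form, $\ell^{NS}(\theta)$ is an instance of the reformulated Bregman divergence $\tilde{B}_{\Psi(z)}(f,g)$ of Eq.~(\ref{eq:bd:reform}) for the scalar potential $\Psi(z)=z\log z-(1+z)\log(1+z)$, with the model-dependent argument taken to be $g=G(y|x;\theta)$. Minimizing $\tilde{B}_{\Psi(z)}$ over $\theta$ is equivalent to minimizing $B_{\Psi(z)}$, and since this $\Psi(z)$ is differentiable and strictly convex on its domain, the quoted result guarantees that the minimum $B_{\Psi(z)}(f,g)=0$ is attained exactly when $g=f$ almost everywhere. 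Thus the whole content of Proposition~\ref{prop:ns:g} collapses to identifying the target function $f$ that pairs with $g=G(y|x;\theta)$, and then reading off $G=f$.

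To pin down $f$, I would first rewrite $\ell^{NS}(\theta)$ from Eq.~(\ref{eq:ns:loss}) in expectation: replace the empirical average over $D$ by the expectation under $p_{d}(x,y)$ and the inner sum over the $\nu$ noise draws by $\nu$ times the expectation under $p_{n}(y|x)$, then substitute $P(C=1,y|x;\theta)=1/(1+G)$ and $P(C=0,y|x;\theta)=G/(1+G)$. This turns the objective into a sum over $(x,y)$, weighted by $p_{d}(x)$, of $\bigl(p_{d}(y|x)+\nu p_{n}(y|x)\bigr)\log(1+G)-\nu p_{n}(y|x)\log G$. In parallel I would expand $\tilde{B}_{\Psi(z)}$ using $\nabla\Psi(g)=\log\tfrac{g}{1+g}$, which reduces the bracket in Eq.~(\ref{eq:bd:reform}) to $(1+f)\log(1+g)-f\log g$. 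Matching the two expressions coefficient by coefficient in $\log(1+G)$ and $\log G$ forces $p_{d}(y|x)\,f=\nu p_{n}(y|x)$, that is $f=\nu p_{n}(y|x)/p_{d}(y|x)$, after which the $\log(1+G)$ coefficients agree automatically. Setting $g=f$ yields Eq.~(\ref{eq:ns:obj-true}).

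An equivalent, more self-contained route bypasses Proposition~\ref{prop:ns:psi} entirely: treat $G(y|x)$ as a free nonnegative function and minimize the population objective pointwise. Differentiating $\bigl(p_{d}(y|x)+\nu p_{n}(y|x)\bigr)\log(1+G)-\nu p_{n}(y|x)\log G$ with respect to $G$ and setting the derivative to zero gives $\bigl(p_{d}(y|x)+\nu p_{n}(y|x)\bigr)/(1+G)=\nu p_{n}(y|x)/G$, whose solution is again $G=\nu p_{n}(y|x)/p_{d}(y|x)$; a second-order check, together with the fact that the objective diverges to $+\infty$ at both $G\to0^{+}$ and $G\to\infty$, confirms this unique stationary point is the global minimizer.

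I expect the main obstacle to be rigor in the passage from the finite-sample loss to the population objective that legitimizes the pointwise (nonparametric) optimization. One has to argue that $G(y|x;\theta)$ is expressive enough to realize the target ratio at every $(x,y)$, so that the per-coordinate minimizer is actually attainable, and one must keep the expressions well defined by invoking the standing assumption that $p_{n}(y|x)$ (and $p_{d}(y|x)$) are non-zero, so that $\log G$ and the ratio $p_{n}/p_{d}$ make sense. By contrast, the term-by-term matching and the convexity check are routine; the Bregman route has the added advantage that the $B_{\Psi(z)}(f,g)=0 \Leftrightarrow f=g$ characterization supplies global optimality for free, sidestepping the coercivity argument needed in the direct approach.
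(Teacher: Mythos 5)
Your primary route is essentially the paper's own proof. The paper establishes Propositions \ref{prop:ns:psi}, \ref{prop:ns:g}, and \ref{prop:ns:obj} in a single combined derivation in Appendix \ref{breg:ns:details}: it passes to the population form $\sum_{x,y}\bigl[\log(1+g(u))+f(u)\log(1+\tfrac{1}{g(u)})\bigr]p_{d}(x,y)$ with $f(u)=\nu p_{n}(y|x)/p_{d}(y|x)$ and $g(u)=G(y|x;\theta)$, rewrites the bracket as $-\Psi(g)+\nabla\Psi(g)g-\nabla\Psi(g)f$ for $\Psi(z)=z\log z-(1+z)\log(1+z)$, and reads off $g=f$ at the minimum. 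Your coefficient matching in $\log(1+G)$ and $\log G$ (using the bracket form $(1+f)\log(1+g)-f\log g$) is an equivalent, somewhat tidier way of carrying out the same identification, and it lands on the same target $f$. Your second, self-contained route --- pointwise differentiation of $(p_{d}(y|x)+\nu p_{n}(y|x))\log(1+G)-\nu p_{n}(y|x)\log G$ together with a coercivity check at $G\to 0^{+}$ and $G\to\infty$ --- is a genuinely more elementary alternative that the paper does not use; it buys independence from the Bregman machinery at the cost of having to argue global optimality by hand, whereas the Bregman route inherits it from the strict convexity of $\Psi$. Both of your routes rest on the same unstated assumptions as the paper's proof (replacing empirical averages by expectations, nonparametric expressiveness of $G$, and non-vanishing $p_{n}$ and $p_{d}$), so there is no gap relative to the paper's own level of rigor.
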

\begin{prop}
\label{prop:ns:obj}
The objective distribution of $P_{\theta}(y|x)$ for $\ell^{NS}(\theta)$ is
\begin{equation}
\frac{p_{d}(y|x)}{p_{n}(y|x)\sum\limits_{y_{i}\in Y}\frac{p_{d}(y_i|x)}{p_{n}(y_i|x)}}.
\label{eq:ns:softmax}
\end{equation}
\end{prop}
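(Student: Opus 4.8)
The plan is to read off the objective distribution by combining the two preceding propositions with the definitions already in hand, since $P_{\theta}(y|x)$ is simply the softmax of Eq.~(\ref{eq:softmax}) evaluated at the optimum of $\ell^{NS}(\theta)$. First I would recall the two facts that do all the work: the model's scoring distribution is defined through $G(y|x;\theta)=\exp(-f_{\theta}(x,y))$, and by Proposition~\ref{prop:ns:g} any minimizer of $\ell^{NS}(\theta)$ satisfies $G(y|x;\theta)=\nu\,p_{n}(y|x)/p_{d}(y|x)$. Inverting the definition gives $\exp(f_{\theta}(x,y))=1/G(y|x;\theta)$, so at the optimum $\exp(f_{\theta}(x,y))=p_{d}(y|x)/(\nu\,p_{n}(y|x))$.

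Next I would substitute this expression into the softmax definition $P_{\theta}(y|x)=\exp(f_{\theta}(x,y))/\sum_{y'\in Y}\exp(f_{\theta}(x,y'))$. Both numerator and denominator then carry the common factor $1/\nu$, which cancels, leaving $P_{\theta}(y|x)=\left[p_{d}(y|x)/p_{n}(y|x)\right]/\sum_{y'\in Y}\left[p_{d}(y'|x)/p_{n}(y'|x)\right]$. Pulling the factor $p_{n}(y|x)$ out of the numerator and into the denominator then reproduces exactly Eq.~(\ref{eq:ns:softmax}).

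The computation is essentially mechanical, so I do not expect a genuine analytic obstacle; the only point requiring care is to track the sign convention $G=\exp(-f_{\theta})$ against the $+f_{\theta}$ appearing inside the softmax, as this is precisely what turns the ratio $\nu\,p_{n}/p_{d}$ from Proposition~\ref{prop:ns:g} into the reciprocal $p_{d}/p_{n}$ that governs the final distribution. A secondary point worth stating explicitly is that Proposition~\ref{prop:ns:g} pins down $f_{\theta}$ exactly through $G$, and since the softmax is invariant to an additive shift in the scores, the cancellation of the noise-sample count $\nu$ is consistent and the objective distribution is determined unambiguously, independent of $\nu$.
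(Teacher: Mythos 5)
Your proposal is correct and matches the paper's own argument: the appendix likewise takes the optimality condition $G(y|x;\theta)=\nu p_{n}(y|x)/p_{d}(y|x)$ from Proposition~\ref{prop:ns:g}, inverts it to $\exp(f_{\theta}(x,y))=p_{d}(y|x)/(\nu p_{n}(y|x))$, and substitutes into the softmax of Eq.~(\ref{eq:softmax}), with $\nu$ cancelling. No substantive difference in approach.
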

\begin{proof}
We give the proof of Props. \ref{prop:ns:psi}, \ref{prop:ns:g}, and \ref{prop:ns:obj} in Appendix \ref{breg:ns:details} of the supplemental material.
\end{proof}

We can also investigate the validity of Props. \ref{prop:ns:psi}, \ref{prop:ns:g}, and \ref{prop:ns:obj} by comparing them with the previously reported result. For this purpose, we prove the following proposition:
\begin{prop}
When Eq. (\ref{eq:ns:obj-true}) satisfies $\nu=1$ and $p_{n}(y|x)=p_{d}(y)$, $f_{\theta}(x,y)$ equals point-wise mutual information (PMI).
\end{prop}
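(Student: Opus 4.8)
The plan is to start directly from the optimality condition established in Proposition~\ref{prop:ns:g}, namely Eq.~(\ref{eq:ns:obj-true}), which tells us that at the minimizer of $\ell^{NS}(\theta)$ we have $G(y|x;\theta)=\nu\, p_{n}(y|x)/p_{d}(y|x)$. Since $G(y|x;\theta)$ was defined as $\exp(-f_{\theta}(x,y))$, I would substitute this definition into the left-hand side to obtain $\exp(-f_{\theta}(x,y))=\nu\, p_{n}(y|x)/p_{d}(y|x)$, and then take the logarithm of both sides.

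Taking logs and rearranging gives the general relation $f_{\theta}(x,y)=\log p_{d}(y|x)-\log p_{n}(y|x)-\log\nu$, which holds for arbitrary $\nu$ and noise $p_{n}$. Next I would impose the two hypotheses of the statement: setting $\nu=1$ eliminates the $\log\nu$ term, and setting $p_{n}(y|x)=p_{d}(y)$ (the label marginal) replaces the noise term, yielding $f_{\theta}(x,y)=\log\!\big(p_{d}(y|x)/p_{d}(y)\big)$.

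Finally, I would identify this quantity with PMI. Writing $p_{d}(y|x)=p_{d}(x,y)/p_{d}(x)$, the ratio becomes $p_{d}(x,y)/\big(p_{d}(x)\,p_{d}(y)\big)$, so $f_{\theta}(x,y)=\log\frac{p_{d}(x,y)}{p_{d}(x)\,p_{d}(y)}$, which is exactly the point-wise mutual information between $x$ and $y$ under the data distribution, completing the proof.

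The obstacle here is minimal, since once Proposition~\ref{prop:ns:g} is granted the argument is a short substitution. The only points requiring care are (i) tracking the sign coming from the definition $G=\exp(-f_{\theta})$, so that the logarithm produces $+\log p_{d}(y|x)$ rather than its negative; and (ii) reading the assumption $p_{n}(y|x)=p_{d}(y)$ as the label marginal, which is precisely what makes the expression collapse to the standard PMI form, since any other interpretation of $p_{d}(y)$ would alter either the constant or the denominator.
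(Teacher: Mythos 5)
Your proof is correct and takes essentially the same route as the paper's: invert the optimality condition $G(y|x;\theta)=\exp(-f_{\theta}(x,y))=\nu p_{n}(y|x)/p_{d}(y|x)$, take logarithms, impose $\nu=1$ and $p_{n}(y|x)=p_{d}(y)$, and expand $p_{d}(y|x)=p_{d}(x,y)/p_{d}(x)$ to recognize PMI. Your final expression $\log\frac{p_{d}(x,y)}{p_{d}(x)p_{d}(y)}$ is in fact the correct one; the paper's appendix contains a typo, writing $p_{d}(y)p_{d}(y)$ in the denominator.
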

\begin{proof}
This is described in Appendix \ref{app:proof:pmi} of the supplemental material.
\end{proof}

This observation is consistent with that by \citet{10.5555/2969033.2969070}. The differences between their representation and ours are as follows. (1) Our noise distribution is general in the sense that its definition is not restricted to a unigram distribution; (2) we mainly discuss $p_{\theta}(y|x)$ not $f_{\theta}(x,y)$; and (3) we can compare NS- and SCE-based loss functions through the Bregman divergence.

\subsection{Various Noise Distributions}
\label{sec:obj:dist:noise}
Different from the objective distribution of SCE, Eq. (\ref{eq:ns:softmax}) is affected by the type of noise distribution $p_{n}(y|x)$. To investigate the actual objective distribution for $\ell^{NS}(\theta)$, we need to consider separate cases for each type of noise distribution. In this subsection, we further analyze Eq. (\ref{eq:ns:softmax}) for each separate case.

\subsubsection{NS with Uniform Noise}
First, we investigated the case of a uniform distribution because it is one of the most common noise distributions for $\ell^{NS}(\theta)$ in the KGE task. From Eq. (\ref{eq:ns:softmax}), we can induce the following property.
\begin{prop}
When $p_{n}(y|x)$ is a uniform distribution, Eq. (\ref{eq:ns:softmax}) equals $p_{d}(y|x)$.
\end{prop}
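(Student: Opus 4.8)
The plan is to substitute the uniform noise distribution directly into the objective distribution of Eq.~(\ref{eq:ns:softmax}) and exploit the fact that a uniform distribution is constant in $y$, so that it factors out and cancels.

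First I would observe that if $p_{n}(y|x)$ is uniform over the set $Y$ of predictable entities, then $p_{n}(y|x)$ takes the same value for every $y$, namely $p_{n}(y|x) = 1/|Y|$. Denoting this constant by $c$, we have both $p_{n}(y|x) = c$ in the denominator outside the sum and $p_{n}(y_{i}|x) = c$ for every term inside the sum, independently of the summation index.

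Next I would substitute these values into Eq.~(\ref{eq:ns:softmax}). The denominator becomes $c \sum_{y_{i}\in Y} p_{d}(y_{i}|x)/c$, in which the two occurrences of the constant $c$ cancel. This leaves the expression $p_{d}(y|x) \big/ \sum_{y_{i}\in Y} p_{d}(y_{i}|x)$. Since $p_{d}(\cdot\,|x)$ is a conditional probability distribution over $Y$, the normalization $\sum_{y_{i}\in Y} p_{d}(y_{i}|x) = 1$ holds, and the expression reduces to $p_{d}(y|x)$, as claimed.

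There is no substantive obstacle here; the argument is a direct cancellation combined with the normalization of $p_{d}$. The only point worth stating explicitly is the normalization assumption $\sum_{y_{i}\in Y} p_{d}(y_{i}|x) = 1$, which is what makes the denominator collapse to one. Conceptually, the result says that uniform noise introduces no bias into the objective distribution of NS, which is precisely why the earlier claim that NS with uniform noise shares the objective distribution of SCE (namely $p_{d}(y|x)$) follows immediately.
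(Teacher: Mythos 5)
Your proof is correct and follows essentially the same route as the paper's: the constant value of the uniform noise cancels inside the sum, leaving $\sum_{y_{i}\in Y}p_{d}(y_{i}|x)=1$ in the denominator, so Eq.~(\ref{eq:ns:softmax}) reduces to $p_{d}(y|x)$. Your explicit mention of the normalization assumption is a minor clarification the paper states only implicitly.
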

\begin{proof}
This is described in Appendix \ref{app:proof:uninoise} of the supplemental material.
\end{proof}
\citet{nce-ns} indicated that NS is equal to NCE when $\nu=|Y|$ and $P_{n}(y|x)$ is uniform. However, as we showed, in terms of the objective distribution, the value of $\nu$ is not related to the objective distribution because Eq. (\ref{eq:ns:softmax}) is independent of $\nu$.

\subsubsection{NS with Frequency-based Noise}
\label{subsec:ns:freq}
In the original setting of NS~\cite{DBLP:journals/corr/MikolovSCCD13}, the authors chose as $p_{n}(y|x)$ a unigram distribution of $y$, which is independent of $x$. Such a frequency-based distribution is calculated in terms of frequencies on a corpus and independent of the model parameter $\theta$. Since in this case, different from the case of a uniform distribution, $p_{n}(y|x)$ remains on the right side of Eq.~(\ref{eq:ns:softmax}), $p_{\theta}(y|x)$ decreases when $p_{n}(y|x)$ increases. Thus, we can interpret frequency-based noise as a type of smoothing for $p_{d}(y|x)$. The smoothing of NS w/ Freq decreases the importance of high-frequency labels in the training data for learning more general vector representations, which can be used for various tasks as pre-trained vectors. Since we can expect pre-trained vectors to work as a prior \cite{10.5555/1756006.1756025} that prevents models from overfitting, we tried to use NS w/ Freq for pre-training KGE models in our experiments.

\subsubsection{Self-Adversarial NS}
\newcite{DBLP:journals/corr/abs-1902-10197} recently proposed SANS, which uses $p_{\theta}(y|x)$ for generating negative samples. By replacing $p_{n}(y|x)$ with $p_{\theta}(y|x)$, the objective distribution when using SANS is as follows:
\begin{align}
 p_{\theta}(y|x) = \frac{p_{d}(y|x)}{p_{\hat{\theta}}(y|x)\sum\limits_{y_{i}\in Y}\frac{p_{d}(y_i|x)}{p_{\hat{\theta}}(y_i|x)}}\label{eq:sa:ns:softmax},
\end{align}
where $\hat{\theta}$ is a parameter set updated in the previous iteration. Because both the left and right sides of Eq. (\ref{eq:sa:ns:softmax}) include $p_{\theta}(y|x)$, we cannot obtain an analytical solution of $p_{\theta}(y|x)$ from this equation. However, we can consider special cases of $p_{\theta}(y|x)$ to gain an understanding of Eq. (\ref{eq:sa:ns:softmax}). At the beginning of the training, $p_{\theta}(y|x)$ follows a discrete uniform distribution $u\{1,|Y|\}$ because $\theta$ is randomly initialized. In this situation, when we set $p_{\hat{\theta}}(y|x)$ in Eq. (\ref{eq:sa:ns:softmax}) to a discrete uniform distribution $u\{1,|Y|\}$, $p_{\theta}(y|x)$ becomes
\begin{align}
p_{\theta}(y|x) = p_{d}(y|x). \label{eq:sa:obj:1}
\end{align}
Next, when we set $p_{\hat{\theta}}(y|x)$ in Eq. (\ref{eq:sa:ns:softmax}) as $p_{d}(y|x)$, $p_{\theta}(y|x)$ becomes
\begin{align}
p_{\theta}(y|x) = u\{1,|Y|\}. \label{eq:sa:obj:2}
\end{align}

In actual mini-batch training, $\theta$ is iteratively updated for every batch of data. Because $p_{\theta}(y|x)$ converges to $u\{1,|Y|\}$ when $p_{\hat{\theta}}(y|x)$ is close to $p_{d}(y|x)$ and $p_{\theta}(y|x)$ converges to $p_{d}(y|x)$ when $p_{\hat{\theta}}(y|x)$ is close to $u\{1,|Y|\}$, we can approximately regard the objective distribution of SANS as a mixture of $p_{d}$ and $u\{1,|Y|\}$. Thus, we can represent the objective distribution of $p_{\theta}(y|x)$ as
\begin{align}
p_{\theta}(y|x) \approx (1-\lambda) p_{d}(y|x) + \lambda u\{1,|Y|\}
\end{align}
where $\lambda$ is a hyper-parameter to determine whether $p_{\theta}(y|x)$ is close to $p_{d}(y|x)$ or $u\{1,|Y|\}$. Assuming that $p_{\theta}(y|x)$ starts from $u\{1,|Y|\}$, $\lambda$ should start from $0$ and gradually increase through training. Note that $\lambda$ corresponds to a temperature $\alpha$ for $p_{\hat{\theta}}(y|x)$ in SANS, defined as
\begin{align}
 p_{\hat{\theta}}(y|x) = \frac{\exp(\alpha f_{\theta}(x,y))}{\sum_{y'\in Y} \exp(\alpha f_{\theta}(x,y'))},
 \label{eq:sans:temp}
\end{align}
where $\alpha$ also adjusts $p_{\hat{\theta}}(y|x)$ to be close to $p_{d}(y|x)$ or $u\{1,|Y|\}$.

\begin{table*}[t]
 \centering
 \small
 \begin{tabular}{lccp{4.5cm}}
 \toprule
 Loss & Objective Distribution & $\Psi(z)$ or $\Psi(\mathbf{z})$ & Remarks \\
 \cmidrule{1-4}
 NS w/ Uni & $p_{d}(y|x)$ & $\Psi(z)=z\log(z)-(1+z)\log(1+z)$ & \\
 NS w/ Freq & $T^{-1}_{x,y}p_{d}(y|x)$ & $\Psi(z)=z\log(z)-(1+z)\log(1+z)$ & $T_{x,y}=p_{n}(y|x)\sum\limits_{y_{i}\in Y}\frac{p_{d}(y_i|x)}{p_{n}(y_i|x)}$\\
 SANS & $(1-\lambda) p_{d}(y|x) + \lambda u\{1,|Y|\}$ & $\Psi(z)=z\log(z)-(1+z)\log(1+z)$ & Approximately derived. $\lambda$ increases from zero in training. \\
 \cmidrule{1-4}
 SCE & $p_{d}(y|x)$ & $\Psi(\mathbf{z})=\sum_{i=1}^{len(\mathbf{z})}z_{i}\log{z_{i}}$ & \\
 SCE w/ BC & $T^{-1}_{x,y}p_{d}(y|x)$ & $\Psi(\mathbf{z})=\sum_{i=1}^{len(\mathbf{z})}z_{i}\log{z_{i}}$ & $T_{x,y}=p_{n}(y|x)\sum\limits_{y_{i}\in Y}\frac{p_{d}(y_i|x)}{p_{n}(y_i|x)}$ \\
 SCE w/ LS & $(1-\lambda)p_{d}(y|x) + \lambda u\{1,|Y|\}$ & $\Psi(\mathbf{z})=\sum_{i=1}^{len(\mathbf{z})}z_{i}\log{z_{i}}$ & $\lambda$ is fixed.\\
 \bottomrule
 \end{tabular}
 \caption{Summary of our theoretical findings. w/ Uni denotes with uniform noise, w/ Freq denotes with frequency-based noise, w/ BC denotes with backward correction, and w/ LS denotes with label smoothing.}
 \label{tab:overview:loss}
\end{table*}

\section{Theoretical Relationships among Loss Functions}
\label{sec:threl}

\subsection{Corresponding SCE form to NS with Frequency-based Noise}
We induce a corresponding cross entropy loss from the objective distribution for NS with frequency-based noise. We set $T_{x,y} = p_{n}(y|x)\sum\limits_{y_{i}\in Y}\frac{p_{d}(y_i|x)}{p_{n}(y_i|x)}$, $q(y|x)=T_{x,y}^{-1}p_{d}(y|x)$, and $\Psi(\mathbf{z})=\sum_{i=1}^{len(\mathbf{z})}z_{i}\log{z_{i}}$. Under these conditions, following induction from Eq. (\ref{eq:ce:middle}) to Eq. (\ref{eq:ce:final}), we can reformulate $\tilde{B}_{\Psi(\mathbf{z})}(q(\mathbf{y}|x),p(\mathbf{y}|x))$ as follows:
\begin{align}
 &\tilde{B}_{\Psi(\mathbf{z})}(q(\mathbf{y}|x),p_{\theta}(\mathbf{y}|x))\nonumber\\
 =&-\sum_{x,y}\left[\sum_{i=1}^{|Y|} T_{x,y}^{-1}p_{d}(y_{i}|x)\log{p_{\theta}(y_{i}|x)}\right]p_{d}(x,y)\nonumber\\
 =&-\frac{1}{|D|}\sum_{(x,y)\in D} T_{x,y}^{-1}\log{p_{\theta}(y|x)}.\label{eq:ce:bc}
\end{align}
Except that $T_{x,y}$ is conditioned by $x$ and not normalized for $y$, we can interpret this loss function as SCE with backward correction (SCE w/ BC) \citep{patrini2016making}. Taking into account that backward correction can be a smoothing method for predicting labels \cite{pmlr-v119-lukasik20a}, this relationship supports the theoretical finding that NS can adopt a smoothing to the objective distribution.

Because the frequency-based noise is used in word2vec as unigram noise, we specifically consider the case in which $p_{n}(y|x)$ is set to unigram noise. In this case, we can set $p_{n}(y|x)=p_{d}(y)$. Since relation tuples do not appear twice in a knowledge graph, we can assume that $p_{d}(x,y)$ is uniform. Accordingly, we can change $T_{x,y}^{-1}$ to $\frac{1}{ p_{d}(y)\sum\limits_{y_{i}\in Y}\frac{p_{d}(y_i|x)}{p_{d}(y_i)}} = \frac{1}{ p_{d}(y)\sum\limits_{y_{i}\in Y}\frac{p_{d}(y_i,x)}{p_{d}(y_i)p_{d}(x)}}=\frac{p_{d}(x)}{p_{d}(y)C}$, where $C$ is a constant value, and we can reformulate Eq. (\ref{eq:ce:bc}) as follows:
\begin{align}
&-\frac{1}{|D|}\sum_{(x,y)\in D} \frac{p_{d}(x)}{p_{d}(y)C}\log{p_{\theta}(y|x)}\nonumber\\
\propto &-\frac{1}{|D|}\sum_{(x,y)\in D} \frac{\#x}{\#y}\log{p_{\theta}(y|x)},\label{eq:sce:unigram}
\end{align}
where $\#x$ and $\#y$ respectively represent frequencies for $x$ and $y$ in the training data. We use Eq. (\ref{eq:sce:unigram}) to pre-train models for SCE-based loss functions.

\subsection{Corresponding SCE form to SANS}
We induce a corresponding cross entropy loss from the objective distribution for SANS by setting $q(y|x)=(1-\lambda) p_{d}(y|x) + \lambda u\{1,|Y|\}$ and $\Psi(\mathbf{z})=\sum_{i=1}^{len(\mathbf{z})}z_{i}\log{z_{i}}$. Under these conditions, on the basis of induction from Eq. (\ref{eq:ce:middle}) to Eq. (\ref{eq:ce:final}), we can reformulate $\tilde{B}_{\Psi(\mathbf{z})}(q(\mathbf{y}|x),p_{\theta}(\mathbf{y}|x))$ as follows:
\begin{equation}
\begin{aligned}
 &\tilde{B}_{\Psi(\mathbf{z})}(q(\mathbf{y}|x),p_{\theta}(\mathbf{y}|x))\\
 =&-\sum_{x,y} \biggl[\sum_{i=1}^{|Y|} (1-\lambda) p_{d}(y_{i}|x) \log{p_{\theta}(y_{i}|x)}\biggr.\\
 &\hspace{1.5cm}\biggl.+ \sum_{i=1}^{|Y|} \lambda u\{1,|Y|\} \log{p_{\theta}(y_{i}|x)}\biggr]p_{d}(x,y)\\
 =&-\frac{1}{|D|}\sum_{(x,y) \in D} \biggl[ (1-\lambda)\log{p_{\theta}(y|x)}\biggr. \\
 &\hspace{1.5cm}\biggl.+ \sum_{i=1}^{|Y|} \frac{\lambda}{|Y|} \log{p_{\theta}(y_{i}|x)}\biggr].
\end{aligned}
\label{eq:ce:ls}
\end{equation}
The equation in the brackets of Eq. (\ref{eq:ce:ls}) is the cross entropy loss that has a corresponding objective distribution to that of SANS. This loss function is similar in form to SCE with label smoothing (SCE w/ LS) \cite{label1}. This relationship also accords with the theoretical finding that NS can adopt a smoothing to the objective distribution.

\section{Understanding Loss Functions for Fair Comparisons}
\label{sec:interpret}
We summarize the theoretical findings from Sections \ref{sec:bd}, \ref{sec:ns}, and \ref{sec:threl} in Table \ref{tab:overview:loss}. To compare the results from the theoretical findings, we need to understand the differences in their objective distributions and divergences.

\subsection{Objective Distributions}
The objective distributions for NS w/ Uni and SCE are equivalent. We can also see that the objective distribution for SANS is quite similar to that for SCE w/ LS. These theoretical findings will be important for making a fair comparison between scoring methods trained with the NS and SCE loss functions. When a dataset contains low-frequency entities, SANS and SCE w/ LS can improve the link-prediction performance through their smoothing effect, even if there is no performance improvement from the scoring method itself. For comparing the SCE and NS loss functions fairly, therefore, it is necessary to use the vanilla SCE against NS w/ Uni and use SCE w/ LS against SANS.

However, we still have room to discuss the relationship between SANS and SCE w/ LS because $\lambda$ in SANS increases from zero during training, whereas $\lambda$ in SCE w/ LS is fixed. To introduce the behavior of $\lambda$ in SANS to SCE w/ LS, we tried a simple approach in our experiments that trains KGE models via SCE w/ LS using pre-trained embeddings from SCE as initial parameters. Though this approach is not exactly equivalent to SANS, we expected it to work similarly to increasing $\lambda$ from zero in training.

We also discuss the relationship between NS w/ Freq and SCE w/ BC. While NS w/ Freq is often used for learning word embeddings, neither NS w/ Freq nor SCE w/ BC has been explored in KGE. We investigated whether these loss functions are effective in pre-training KGE models\footnote{As a preliminary experiment, we also trained KGE models via NS w/ Freq and SCE w/ BC. However, these methods did not improve the link-prediction performance because frequency-based noise changes the data distribution drastically.}. Because SANS and SCE w/ LS are similar methods to NS w/ Freq and SCE w/ BC in terms of smoothing, in our experiments, we also compared NS w/ Freq with SANS and SCE w/ BC with SCE w/ LS as pre-training methods.

\begin{table*}[h]
\centering
\resizebox{0.885\textwidth}{!}{
\small
\begin{tabular}{llcccccccc}
\toprule
\multirow{2}{*}{\textbf{Method}} & \multirow{2}{*}{\textbf{Loss}} & \multicolumn{4}{c}{\textbf{FB15k-237}} & \multicolumn{4}{c}{\textbf{WN18RR}} \\
\cmidrule(l{2pt}r{2pt}){3-6}\cmidrule(l{2pt}r{2pt}){7-10}
 & & \textbf{MRR} & \textbf{Hits@1} & \textbf{Hits@3} & \textbf{Hits@10} & \textbf{MRR} & \textbf{Hits@1} & \textbf{Hits@3} & \textbf{Hits@10} \\
\midrule
\multirow{4}{*}{TuckER} & NS & 0.257 & 0.151 & 0.297 & 0.472 & 0.431 & 0.407 & 0.440 & 0.473 \\
 & SANS & 0.330 & 0.238 & 0.365 & 0.512 & 0.445 & 0.421 & 0.455 & 0.489 \\
\cmidrule{2-10}
 & SCE & 0.338 & 0.246 & 0.372 & 0.521 & 0.453 & 0.424 & 0.465 & 0.507 \\
 & SCE w/ LS & 0.343 & 0.251 & 0.378 & 0.529 & 0.472 & 0.441 & 0.483 & 0.528 \\
\midrule
\multirow{4}{*}{RESCAL} & NS & 0.337 & 0.247 & 0.368 & 0.516 & 0.385 & 0.354 & 0.405 & 0.437 \\
 & SANS & 0.339 & 0.249 & 0.372 & 0.520 & 0.389 & 0.363 & 0.404 & 0.434 \\
\cmidrule{2-10}
 & SCE & 0.352 & 0.260 & 0.387 & 0.537 & 0.451 & 0.417 & 0.470 & 0.512 \\
 & SCE w/ LS & 0.363 & 0.269 & 0.400 & 0.548 & 0.469 & 0.435 & 0.485 & 0.529 \\
\midrule
\multirow{4}{*}{ComplEx} & NS & 0.296 & 0.211 & 0.324 & 0.468 & 0.394 & 0.373 & 0.403 & 0.432 \\
 & SANS & 0.300 & 0.214 & 0.328 & 0.472 & 0.432 & 0.407 & 0.442 & 0.480 \\
\cmidrule{2-10}
 & SCE & 0.300 & 0.218 & 0.326 & 0.466 & 0.463 & 0.434 & 0.473 & 0.521 \\
 & SCE w/ LS & 0.318 & 0.231 & 0.348 & 0.493 & 0.477 & 0.441 & 0.491 & 0.546 \\
\midrule
\multirow{4}{*}{DistMult} & NS & 0.304 & 0.219 & 0.336 & 0.470 & 0.389 & 0.374 & 0.394 & 0.416 \\
 & SANS & 0.320 & 0.234 & 0.352 & 0.489 & 0.410 & 0.386 & 0.419 & 0.452 \\
\cmidrule{2-10}
 & SCE & 0.342 & 0.252 & 0.374 & 0.521 & 0.438 & 0.407 & 0.447 & 0.497 \\
 & SCE w/ LS & 0.344 & 0.254 & 0.377 & 0.526 & 0.448 & 0.410 & 0.460 & 0.527 \\
\midrule
\multirow{4}{*}{TransE} & NS & 0.284 & 0.182 & 0.319 & 0.498 & 0.218 & 0.011 & 0.390 & 0.510 \\
 & SANS & 0.328 & 0.230 & 0.365 & 0.525 & 0.219 & 0.016 & 0.394 & 0.514 \\
\cmidrule{2-10}
 & SCE & 0.324 & 0.232 & 0.359 & 0.508 & 0.229 & 0.054 & 0.366 & 0.523 \\
 & SCE w/ LS & 0.323 & 0.231 & 0.359 & 0.508 & 0.229 & 0.054 & 0.369 & 0.522 \\
\midrule
\multirow{4}{*}{RotatE} & NS & 0.301 & 0.203 & 0.333 & 0.505 & 0.469 & 0.429 & 0.484 & 0.547 \\
 & SANS & 0.333 & 0.238 & 0.371 & 0.523 & 0.472 & 0.431 & 0.487 & 0.550 \\
\cmidrule{2-10}
 & SCE & 0.315 & 0.228 & 0.347 & 0.486 & 0.452 & 0.423 & 0.463 & 0.507 \\
 & SCE w/ LS & 0.315 & 0.228 & 0.346 & 0.489 & 0.447 & 0.417 & 0.461 & 0.502 \\
\bottomrule
\end{tabular}}
\caption{Results for each method in FB15k-237 and WN18RR datasets. Notations are same as those in Table \ref{tab:overview:loss}.}
 \label{tab:results:loss}
\end{table*}

\subsection{Divergences}
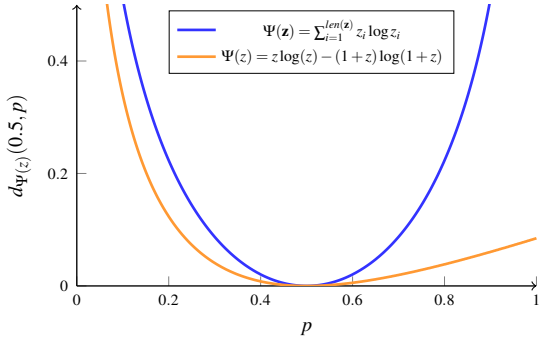
\begin{figure}[t]
 \centering
 \resizebox{0.95\columnwidth}{!}{
 \begin{tikzpicture}
\begin{axis}[width=8.75cm, height=6cm, ticklabel style = {font=\scriptsize}, xlabel={\footnotesize $p$}, xlabel near ticks, ylabel near ticks, ylabel={\footnotesize$d_{\Psi(z)}(0.5,p)$}, xmin=0.0, xmax=1.0, ymin=0, ymax=0.5, samples=500, legend cell align={center},
legend entries={{\scriptsize $\Psi(\mathbf{z})=\sum_{i=1}^{len(\mathbf{z})}z_{i}\log{z_{i}}$},{\scriptsize $\Psi(z)=z\log(z)-(1+z)\log(1+z)$}},
legend style={at={(0.20,0.98)}, anchor=north west, draw=black},
axis lines*=left,
axis line style={->,line width=0.6pt}]
 \addplot[blue!80!white, very thick, domain=0.0:1.0] {ln(0.5)-0.5*ln(x)-0.5*ln(1-x)};
 \addplot[orange!80!white, very thick, domain=0.0:1.0] {(0.5*ln(0.5)-(1+0.5)*ln(1+0.5))-(x*ln(x)-(1+x)*ln(1+x))-(ln(x)-ln(1+x))*(0.5-x)};
\end{axis}
\end{tikzpicture}}
 \caption{Divergence between $0.5$ and $p$ in $d_{\Psi(z)}$ for each $\Psi (z)$. }
 \label{fig:divergence}
\end{figure}

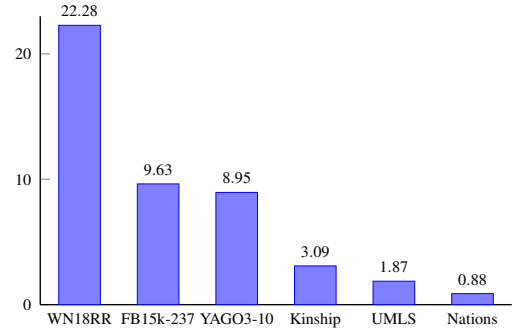
\begin{figure}[t]
 \centering
 \resizebox{0.875\columnwidth}{!}{
 \begin{tikzpicture}
 \begin{axis}[width=8.75cm, height=6cm, ticklabel style = {font=\scriptsize},
 symbolic x coords={WN18RR, FB15k-237, YAGO3-10, Kinship, UMLS, Nations},
 nodes near coords,
 every node near coord/.append style={font=\scriptsize},
 axis lines*=left,
 bar width=18pt,
 xtick=data,ymin=0,ymax=23,
 axis line style={line width=0.6pt}
]
 \addplot[ybar,fill=blue!50!white,draw=blue] coordinates {
 (WN18RR,22.2761416) (FB15k-237,9.629901549) (YAGO3-10,8.949933773) (Kinship,3.089826204) (UMLS,1.868998551) (Nations,0.880179032)
 };
 \end{axis}
\end{tikzpicture}}
 \caption{KL divergence of $p_d(y|x)$ between training and test relations for each dataset}
 \label{fig:kl}
\end{figure}

Comparing $\Psi(z)$ for NS and SCE losses is as important as focusing on their objective distributions. The $\Psi(z)$ determines the distance between model-predicted and data distributions in the loss. It has an important role in determining the behavior of the model. Figure \ref{fig:divergence} shows the distance in Eq. (\ref{eq:bd:reform}) between the probability $p$ and probability $0.5$ for each $\Psi$ in Table \ref{tab:overview:loss}\footnote{In this setting, we can expand $\Psi(\mathbf{z})=\sum_{i=1}^{len(\mathbf{z})}z_{i}\log{z_{i}}$ to $\Psi(z)=z\log{z} + (1-z)\log{(1-z)}$.}. As we can see from the example, $d_{\Psi(z)}(0.5,p)$ of the SCE loss has a larger distance than that of the NS loss. In fact, \citet{8930624} proved that the upper bound of the Bregman divergence for binary labels when $\Psi(\mathbf{z})=\sum_{i=1}^{len(\mathbf{z})}z_{i}\log{z_{i}}$. This means that the SCE loss imposes a larger penalty on the same predicted value than the NS loss when the value of the learning target is the same between the two losses\footnote{See Appendix. \ref{app:proof:bound} for the further details.}.

However, this does not guarantee that the distance of SCE is always larger than NS. This is because the values of the learning target between the two losses are not always the same. To take into account the generally satisfied property, we also focus on the convexity of the functions. In each training instance, the first-order and second-order derivatives of these loss functions indicate that SCE is convex, but NS is not in their domains\footnote{\newcite{goldberg2014word2vec} discuss the convexity of the inner product in NS. Different from theirs, our discussion is about the convexity of the loss functions itself.}. Since this property is independent of the objective distribution, we can consider SCE fits the model more strongly to the training data in general. Because of these features, SCE can be prone to overfitting.

Whether the overfitting is a problem depends on how large the difference between training and test data is. To measure the difference between training and test data in a KG dataset, we calculated the Kullback-Leibler (KL) divergence for $p(\mathbf{y}|x)$ between the training and test data of commonly used KG datasets. To compute $p(y|x)$, we first calculated $p(e_{i}|r_{k},e_{j})=p(e_{i}|r_{k})+p(e_{i}|e_{j})$ on the basis of frequencies in the data then calculated $p(e_{j}|r_{k},e_{i})$ in the same manner. We treated both $p(e_{i}|r_{k},e_{j})$ and $p(e_{j}|r_{k},e_{i})$ as $p(y|x)$. We denote $p(\mathbf{y}|x)$ in the training data as $P$ and in the test data as $Q$. With these notations, we calculated $D_{KL}(P||Q)$ as the KL divergence for $p(\mathbf{y}|x)$ between the test and training data. Figure \ref{fig:kl} shows the results. There is a large difference in the KL divergence between FB15k-237 and WN18RR. We investigated how this difference affects the SCE and NS loss functions for learning KGE models.

In a practical setting, the loss function's divergence is not the only factor to affect the fit to the training data. Model selection also affects the fitting. However, understanding a model's behavior is difficult due to the complicated relationship between model parameters. For this reason, we experimentally investigated which combinations of models and loss functions are suitable for link prediction.

\section{Experiments and Discussion}
We conducted experiments to investigate the validity of what we explained in Section \ref{sec:interpret} through a comparison of the NS and SCE losses.

\subsection{Experimental Settings}
We evaluated the following models on the FB15k-237 and WN18RR datasets in terms of the Mean Reciprocal Rank (MRR), Hits@1, Hits@3, and Hits@10 metrics: TuckER~\cite{balazevic-etal-2019-tucker}; RESCAL~\cite{10.5555/2900423.2900470}; ComplEx~\cite{DBLP:conf/icml/TrouillonWRGB16}; DistMult~\cite{yang2015embedding}; TransE~\cite{NIPS2013_1cecc7a7}; RotatE~\cite{DBLP:journals/corr/abs-1902-10197}. We used LibKGE~\cite{libkge}\footnote{\url{https://github.com/uma-pi1/kge}} as the implementation. For each model to be able to handle queries in both directions, we also trained a model for the reverse direction that shares the entity embeddings with the model for the forward direction.

To determine the hyperparameters of these models, for RESCAL, ComplEx, DistMult, and TransE with SCE and SCE w/ LS, we used the settings that achieved the highest performance in a previous study \cite{Ruffinelli2020You} for each loss function as well as the settings from the original papers for TuckER and RotatE. In TransE with NS and SANS, we used the settings used by \citet{DBLP:journals/corr/abs-1902-10197}. When applying SANS, we set $\alpha$ to an initial value of 1.0 for LibKGE for all models except TransE and RotatE, and for TransE and RotatE, where we followed the settings of the original paper since SANS was used in it. When applying SCE w/ LS, we set $\lambda$ to the initial value of LibKGE, 0.3, except on TransE and RotatE. In the original setting of RotatE, because the values of SANS for TransE and RotatE were tuned, we also selected $\lambda$ from \{0.3, 0.1, 0.01\} using the development data in TransE and RotatE for fair comparison. Appendix \ref{app:params} in the supplemental material details the experimental settings.

\subsection{Characteristics of Loss Functions}
Table \ref{tab:results:loss} shows the results for each loss and model combination. In the following subsections, we discuss investigating whether our findings work in a practical setting on the basis of the results.

\subsubsection{Objective Distributions}
In terms of the objective distribution, when SCE w/ LS improves performance, SANS also improves performance in many cases. Moreover, it accords with our finding that SCE w/ LS and SANS have similar effects. For TransE and RotatE, the relationship does not hold, but as we will see later, this is probably because TransE with SCE and RotatE with SCE did not fit the training data. If the SCE does not fit the training data, the effect of SCE w/ LS is suppressed as it has the same effect as smoothing.

\subsubsection{Divergences}
Next, let us focus on the distance of the loss functions. A comparison of the results of WN18RR and FB15k-237 shows no performance degradation of SCE compared with NS. This indicates that the difference between the training and test data in WN18RR is not so large to cause overfitting problems for SCE.

In terms of the combination of models and loss functions, the results of NS are worse than those of SCE in TuckER, RESCAL, ComplEx, and DistMult. Because the four models have no constraint to prevent fitting to the training data, we consider that the lower scores are caused by underfitting. This conjecture is on the basis that the NS loss weakly fits model-predicted distributions to training-data distributions compared with the SCE loss in terms of divergence and convexity.

In contrast, the performance gap between NS and SCE is smaller in TransE and RotatE. This is because the score functions of TransE and RotatE have bounds and cannot express positive values. Since SCE has a normalization term, it is difficult to represent values close to 1 when the score function cannot represent positive values. This feature prevents TransE and RotatE from completely fitting to the training data. Therefore, we can assume that NS can be a useful loss function when the score function is bounded.

\begin{table}[t]
\resizebox{\columnwidth}{!}{
\begin{tabular}{llcccc}
\toprule
\multicolumn{6}{c}{\textbf{FB15k-237}} \\
\midrule
\textbf{Method} & \textbf{Pre-train} & \textbf{MRR} & \textbf{Hits@1} & \textbf{Hits@3} & \textbf{Hits@10} \\
\midrule
 & - & 0.363 & 0.269 & 0.400 & 0.548 \\
RESCAL & SCE & 0.363 & 0.268 & 0.400 & 0.552 \\
+SCE w/ LS & SCE w/ BC & 0.361 & 0.266 & 0.398 & 0.547 \\
 & SCE w/ LS & 0.364 & 0.269 & 0.402 & 0.550 \\
\midrule
 & - & 0.339 & 0.249 & 0.372 & 0.520 \\
RESCAL & NS & 0.342 & 0.251 & 0.376 & 0.524 \\
+SANS & NS w/ Freq & 0.343 & 0.251 & 0.378 & 0.524 \\
 & SANS & 0.345 & 0.254 & 0.380 & 0.525 \\
\midrule
\multicolumn{6}{c}{\textbf{WN18RR}} \\
\midrule
\textbf{Method} & \textbf{Pre-train} & \textbf{MRR} & \textbf{Hits@1} & \textbf{Hits@3} & \textbf{Hits@10} \\
\midrule
 & - & 0.477 & 0.441 & 0.491 & 0.546 \\
ComplEx & SCE & 0.477 & 0.439 & 0.493 & 0.550 \\
+SCE w/ LS & SCE w/ BC & 0.469 & 0.433 & 0.486 & 0.533 \\
 & SCE w/ LS & 0.481 & 0.444 & 0.496 & 0.553 \\
\midrule
 & - & 0.472 & 0.431 & 0.487 & 0.550 \\
RotatE & NS & 0.470 & 0.433 & 0.483 & 0.548 \\
+SANS & NS w/ Freq & 0.470 & 0.428 & 0.484 & 0.553 \\
 & SANS & 0.471 & 0.429 & 0.488 & 0.552 \\
\bottomrule
\end{tabular}}
\caption{Results of pre-training methods. + denotes combination of model and loss function. Other notations are same as those in Table \ref{tab:overview:loss}.}
 \label{tab:results:pretrain}
\end{table}

\subsection{Effectiveness of Pre-training Methods}
We also explored pre-training for learning KGE models. We selected the methods in Table \ref{tab:results:loss} that achieved the best MRR for each NS-based loss and each SCE-based loss in each dataset. In accordance with the success of word2vec, we chose unigram noise for both NS w/ Freq and SCE w/ BC.

Table \ref{tab:results:pretrain} shows the results. Contrary to our expectations, SCE w/ BC does not work well as a pre-training method. Because the unigram noise for SCE w/ BC can drastically change the original data distribution, SCE w/ BC is thought to be effective when the difference between training and test data is large. However, since the difference is not so large in the KG datasets, as discussed in the previous subsection, we believe that the unigram noise may be considered unsuitable for these datasets.

Compared with SCE w/ BC, both SCE w/ LS and SANS are effective for pre-training. This is because the hyperparameters of SCE w/ LS and SANS are adjusted for KG datasets.

When using vanilla SCE as a pre-training method, there is little improvement in prediction performance, compared with other methods. This result suggests that increasing $\lambda$ in training is not as important for improving task performance.

For RotatE, there is no improvement in pre-training. Because RotatE has strict constraints on its relation representation, we believe it may degrade the effectiveness of pre-training.

\section{Related Work}
\newcite{DBLP:journals/corr/MikolovSCCD13} proposed the NS loss function as an approximation of the SCE loss function to reduce computational cost and handle a large vocabulary for learning word embeddings. NS is now used in various NLP tasks, which must handle a large amount of vocabulary or labels. \newcite{melamud-etal-2017-simple} used the NS loss function for training a language model. \newcite{DBLP:conf/icml/TrouillonWRGB16} introduced the NS loss function to KGE. In contextualized pre-trained embeddings, \newcite{clark-etal-2020-pre} indicated that ELECTRA \cite{Clark2020ELECTRA:}, a variant of BERT \cite{devlin-etal-2019-bert}, follows the same manner of the NS loss function.

NS is frequently used to train KGE models. KGE is a task to complement a knowledge graph that describes relationships between entities. Knowledge graphs are used in various important downstream tasks because of its convenience in incorporating external knowledge, such as in a language model \cite{logan-etal-2019-baracks}, dialogue \cite{moon-etal-2019-opendialkg}, question-answering \cite{10.1145/3038912.3052675}, natural language inference \cite{k-m-etal-2018-learning}, and named entity recognition \cite{He_Wu_Yin_Cai_2020}. Thus, current KGE is important in NLP.

Due to the importance of KGE, various scoring methods including RESCAL~\cite{10.5555/2900423.2900470}, TransE~\cite{NIPS2013_1cecc7a7}, DistMult~\cite{yang2015embedding}, ComplEx~\cite{DBLP:conf/icml/TrouillonWRGB16}, TuckER~\cite{balazevic-etal-2019-tucker}, and RotatE~\cite{DBLP:journals/corr/abs-1902-10197} used in our experiment, have been proposed. However, the relationship between these score functions and loss functions is not clear. Several studies~\cite{Ruffinelli2020You,ali2020pykeen} have investigated the best combinations of scoring method, loss function, and their hyperparameters in KG datasets. These studies differ from ours in that they focused on empirically searching for good combinations rather than theoretical investigations.

As a theoretical study, \newcite{10.5555/2969033.2969070} showed that NS is equivalent to factorizing a matrix for PMI when a unigram distribution is selected as a noise distribution. \newcite{nce-ns} investigated the difference between NCE \cite{nce} and NS. \newcite{10.5555/3020548.3020582} revealed that NCE is derivable from Bregman divergence. Our derivation for NS is inspired by their work. \newcite{meister-etal-2020-generalized} proposed a framework to jointly interpret label smoothing and confidence penalty \cite{DBLP:journals/corr/PereyraTCKH17} through investigating their divergence. \newcite{10.1145/3394486.3403218} theoretically induced that a noise distribution that is close to the true distribution behind the training data is suitable for training KGE models in NS. They also proposed a variant of SANS in the basis of their investigation.

Different from these studies, we investigated the distributions at optimal solutions of SCE and NS loss functions while considering several types of noise distribution in NS.

\section{Conclusion}
We revealed the relationships between SCE and NS loss functions in KGE. Through theoretical analysis, we showed that SCE and NS w/ Uni are equivalent in objective distribution, which is the predicted distribution of a model at an optimal solution, and that SCE w/ LS and SANS have similar objective distributions. We also showed that SCE more strongly fits a model to the training data than NS due to the divergence and convexity of SCE.

The experimental results indicate that the differences in the divergence of the two losses were not large enough to affect dataset differences. The results also indicate that SCE works well with highly flexible scoring methods, which do not have any bound of the scores, while NS works well with RotatE, which cannot express positive values due to its bounded scoring. Moreover, they indicate that SCE and SANS work better in pre-training than NS w/ Uni, commonly used for learning word embeddings.

For future work, we will investigate the properties of loss functions in out-of-domain data.

\section*{Acknowledgements}
This work was partially supported by
JSPS Kakenhi Grant nos.~19K20339, 21H03491, and 21K17801.

\bibliography{acl2021}

\begin{thebibliography}{37}
\expandafter\ifx\csname natexlab\endcsname\relax\def\natexlab#1{#1}\fi

\bibitem[{Ali et~al.(2020)Ali, Berrendorf, Hoyt, Vermue, Sharifzadeh, Tresp,
  and Lehmann}]{ali2020pykeen}
Mehdi Ali, Max Berrendorf, Charles~Tapley Hoyt, Laurent Vermue, Sahand
  Sharifzadeh, Volker Tresp, and Jens Lehmann. 2020.
\newblock Pykeen 1.0: A python library for training and evaluating knowledge
  graph emebddings.
\newblock \emph{arXiv preprint arXiv:2007.14175}.

\bibitem[{Balazevic et~al.(2019)Balazevic, Allen, and
  Hospedales}]{balazevic-etal-2019-tucker}
Ivana Balazevic, Carl Allen, and Timothy Hospedales. 2019.
\newblock \href {https://doi.org/10.18653/v1/D19-1522} {{T}uck{ER}: Tensor
  factorization for knowledge graph completion}.
\newblock In \emph{Proceedings of the 2019 Conference on Empirical Methods in
  Natural Language Processing and the 9th International Joint Conference on
  Natural Language Processing (EMNLP-IJCNLP)}, pages 5185--5194, Hong Kong,
  China. Association for Computational Linguistics.

\bibitem[{Banerjee et~al.(2005)Banerjee, Merugu, Dhillon, and
  Ghosh}]{JMLR:v6:banerjee05b}
Arindam Banerjee, Srujana Merugu, Inderjit~S. Dhillon, and Joydeep Ghosh. 2005.
\newblock \href {http://jmlr.org/papers/v6/banerjee05b.html} {Clustering with
  bregman divergences}.
\newblock \emph{Journal of Machine Learning Research}, 6(58):1705--1749.

\bibitem[{Bordes et~al.(2013)Bordes, Usunier, Garcia-Duran, Weston, and
  Yakhnenko}]{NIPS2013_1cecc7a7}
Antoine Bordes, Nicolas Usunier, Alberto Garcia-Duran, Jason Weston, and Oksana
  Yakhnenko. 2013.
\newblock \href
  {https://proceedings.neurips.cc/paper/2013/file/1cecc7a77928ca8133fa24680a88d2f9-Paper.pdf}
  {Translating embeddings for modeling multi-relational data}.
\newblock In \emph{Advances in Neural Information Processing Systems},
  volume~26, pages 2787--2795. Curran Associates, Inc.

\bibitem[{Bordes et~al.(2011)Bordes, Weston, Collobert, and
  Bengio}]{10.5555/2900423.2900470}
Antoine Bordes, Jason Weston, Ronan Collobert, and Yoshua Bengio. 2011.
\newblock Learning structured embeddings of knowledge bases.
\newblock In \emph{Proceedings of the Twenty-Fifth AAAI Conference on
  Artificial Intelligence}, AAAI'11, page 301–306. AAAI Press.

\bibitem[{Bregman(1967)}]{BREGMAN1967200}
L.M. Bregman. 1967.
\newblock \href {https://doi.org/https://doi.org/10.1016/0041-5553(67)90040-7}
  {The relaxation method of finding the common point of convex sets and its
  application to the solution of problems in convex programming}.
\newblock \emph{USSR Computational Mathematics and Mathematical Physics},
  7(3):200 -- 217.

\bibitem[{Broscheit et~al.(2020)Broscheit, Ruffinelli, Kochsiek, Betz, and
  Gemulla}]{libkge}
Samuel Broscheit, Daniel Ruffinelli, Adrian Kochsiek, Patrick Betz, and Rainer
  Gemulla. 2020.
\newblock \href {https://www.aclweb.org/anthology/2020.emnlp-demos.22}
  {{L}ib{KGE} - {A} knowledge graph embedding library for reproducible
  research}.
\newblock In \emph{Proceedings of the 2020 Conference on Empirical Methods in
  Natural Language Processing: System Demonstrations}, pages 165--174.

\bibitem[{Clark et~al.(2020{\natexlab{a}})Clark, Luong, Le, and
  Manning}]{clark-etal-2020-pre}
Kevin Clark, Minh-Thang Luong, Quoc Le, and Christopher~D. Manning.
  2020{\natexlab{a}}.
\newblock \href {https://doi.org/10.18653/v1/2020.emnlp-main.20} {Pre-training
  transformers as energy-based cloze models}.
\newblock In \emph{Proceedings of the 2020 Conference on Empirical Methods in
  Natural Language Processing (EMNLP)}, pages 285--294, Online. Association for
  Computational Linguistics.

\bibitem[{Clark et~al.(2020{\natexlab{b}})Clark, Luong, Le, and
  Manning}]{Clark2020ELECTRA:}
Kevin Clark, Minh-Thang Luong, Quoc~V. Le, and Christopher~D. Manning.
  2020{\natexlab{b}}.
\newblock \href {https://openreview.net/forum?id=r1xMH1BtvB} {Electra:
  Pre-training text encoders as discriminators rather than generators}.
\newblock In \emph{International Conference on Learning Representations}.

\bibitem[{Dettmers et~al.(2018)Dettmers, Pasquale, Pontus, and
  Riedel}]{dettmers2018conve}
Tim Dettmers, Minervini Pasquale, Stenetorp Pontus, and Sebastian Riedel. 2018.
\newblock \href {https://arxiv.org/abs/1707.01476} {Convolutional 2d knowledge
  graph embeddings}.
\newblock In \emph{Proceedings of the 32th AAAI Conference on Artificial
  Intelligence}, pages 1811--1818.

\bibitem[{Devlin et~al.(2019)Devlin, Chang, Lee, and
  Toutanova}]{devlin-etal-2019-bert}
Jacob Devlin, Ming-Wei Chang, Kenton Lee, and Kristina Toutanova. 2019.
\newblock \href {https://doi.org/10.18653/v1/N19-1423} {{BERT}: Pre-training of
  deep bidirectional transformers for language understanding}.
\newblock In \emph{Proceedings of the 2019 Conference of the North {A}merican
  Chapter of the Association for Computational Linguistics: Human Language
  Technologies, Volume 1 (Long and Short Papers)}, pages 4171--4186,
  Minneapolis, Minnesota. Association for Computational Linguistics.

\bibitem[{Dyer(2014)}]{nce-ns}
Chris Dyer. 2014.
\newblock Notes on noise contrastive estimation and negative sampling.
\newblock \emph{arXiv preprint arXiv:1410.8251}.

\bibitem[{Erhan et~al.(2010)Erhan, Bengio, Courville, Manzagol, Vincent, and
  Bengio}]{10.5555/1756006.1756025}
Dumitru Erhan, Yoshua Bengio, Aaron Courville, Pierre-Antoine Manzagol, Pascal
  Vincent, and Samy Bengio. 2010.
\newblock Why does unsupervised pre-training help deep learning?
\newblock \emph{J. Mach. Learn. Res.}, 11:625–660.

\bibitem[{Goldberg and Levy(2014)}]{goldberg2014word2vec}
Yoav Goldberg and Omer Levy. 2014.
\newblock \href {http://arxiv.org/abs/1402.3722} {word2vec explained: deriving
  mikolov et al.'s negative-sampling word-embedding method}.
\newblock \emph{CoRR}, abs/1402.3722.

\bibitem[{Gutmann and Hyv{\"a}rinen(2010)}]{nce}
Michael Gutmann and Aapo Hyv{\"a}rinen. 2010.
\newblock Noise-contrastive estimation: A new estimation principle for
  unnormalized statistical models.
\newblock In \emph{Proceedings of the Thirteenth International Conference on
  Artificial Intelligence and Statistics}, pages 297--304.

\bibitem[{Gutmann and Hirayama(2011)}]{10.5555/3020548.3020582}
Michael~U. Gutmann and Jun-ichiro Hirayama. 2011.
\newblock Bregman divergence as general framework to estimate unnormalized
  statistical models.
\newblock In \emph{Proceedings of the Twenty-Seventh Conference on Uncertainty
  in Artificial Intelligence}, UAI’11, page 283–290, Arlington, Virginia,
  USA. AUAI Press.

\bibitem[{He et~al.(2020)He, Wu, Yin, and Cai}]{He_Wu_Yin_Cai_2020}
Qizhen He, Liang Wu, Yida Yin, and Heming Cai. 2020.
\newblock \href {https://doi.org/10.1609/aaai.v34i05.6299} {Knowledge-graph
  augmented word representations for named entity recognition}.
\newblock \emph{Proceedings of the AAAI Conference on Artificial Intelligence},
  34(05):7919--7926.

\bibitem[{K~M et~al.(2018)K~M, Basu Roy~Chowdhury, and
  Dukkipati}]{k-m-etal-2018-learning}
Annervaz K~M, Somnath Basu Roy~Chowdhury, and Ambedkar Dukkipati. 2018.
\newblock \href {https://doi.org/10.18653/v1/N18-1029} {Learning beyond
  datasets: Knowledge graph augmented neural networks for natural language
  processing}.
\newblock In \emph{Proceedings of the 2018 Conference of the North {A}merican
  Chapter of the Association for Computational Linguistics: Human Language
  Technologies, Volume 1 (Long Papers)}, pages 313--322, New Orleans,
  Louisiana. Association for Computational Linguistics.

\bibitem[{Kadlec et~al.(2017)Kadlec, Bajgar, and
  Kleindienst}]{kadlec-etal-2017-knowledge}
Rudolf Kadlec, Ondrej Bajgar, and Jan Kleindienst. 2017.
\newblock \href {https://doi.org/10.18653/v1/W17-2609} {Knowledge base
  completion: Baselines strike back}.
\newblock In \emph{Proceedings of the 2nd Workshop on Representation Learning
  for {NLP}}, pages 69--74, Vancouver, Canada. Association for Computational
  Linguistics.

\bibitem[{Levy and Goldberg(2014)}]{10.5555/2969033.2969070}
Omer Levy and Yoav Goldberg. 2014.
\newblock Neural word embedding as implicit matrix factorization.
\newblock In \emph{Proceedings of the 27th International Conference on Neural
  Information Processing Systems - Volume 2}, NIPS'14, page 2177–2185,
  Cambridge, MA, USA. MIT Press.

\bibitem[{Logan et~al.(2019)Logan, Liu, Peters, Gardner, and
  Singh}]{logan-etal-2019-baracks}
Robert Logan, Nelson~F. Liu, Matthew~E. Peters, Matt Gardner, and Sameer Singh.
  2019.
\newblock \href {https://doi.org/10.18653/v1/P19-1598} {{B}arack{'}s wife
  hillary: Using knowledge graphs for fact-aware language modeling}.
\newblock In \emph{Proceedings of the 57th Annual Meeting of the Association
  for Computational Linguistics}, pages 5962--5971, Florence, Italy.
  Association for Computational Linguistics.

\bibitem[{Lukasik et~al.(2020)Lukasik, Bhojanapalli, Menon, and
  Kumar}]{pmlr-v119-lukasik20a}
Michal Lukasik, Srinadh Bhojanapalli, Aditya Menon, and Sanjiv Kumar. 2020.
\newblock \href {http://proceedings.mlr.press/v119/lukasik20a.html} {Does label
  smoothing mitigate label noise?}
\newblock In \emph{Proceedings of the 37th International Conference on Machine
  Learning}, volume 119 of \emph{Proceedings of Machine Learning Research},
  pages 6448--6458. PMLR.

\bibitem[{Lukovnikov et~al.(2017)Lukovnikov, Fischer, Lehmann, and
  Auer}]{10.1145/3038912.3052675}
Denis Lukovnikov, Asja Fischer, Jens Lehmann, and S\"{o}ren Auer. 2017.
\newblock \href {https://doi.org/10.1145/3038912.3052675} {Neural network-based
  question answering over knowledge graphs on word and character level}.
\newblock In \emph{Proceedings of the 26th International Conference on World
  Wide Web}, WWW '17, page 1211–1220, Republic and Canton of Geneva, CHE.
  International World Wide Web Conferences Steering Committee.

\bibitem[{Meister et~al.(2020)Meister, Salesky, and
  Cotterell}]{meister-etal-2020-generalized}
Clara Meister, Elizabeth Salesky, and Ryan Cotterell. 2020.
\newblock \href {https://doi.org/10.18653/v1/2020.acl-main.615} {Generalized
  entropy regularization or: There{'}s nothing special about label smoothing}.
\newblock In \emph{Proceedings of the 58th Annual Meeting of the Association
  for Computational Linguistics}, pages 6870--6886, Online. Association for
  Computational Linguistics.

\bibitem[{Melamud et~al.(2017)Melamud, Dagan, and
  Goldberger}]{melamud-etal-2017-simple}
Oren Melamud, Ido Dagan, and Jacob Goldberger. 2017.
\newblock \href {https://doi.org/10.18653/v1/D17-1198} {A simple language model
  based on {PMI} matrix approximations}.
\newblock In \emph{Proceedings of the 2017 Conference on Empirical Methods in
  Natural Language Processing}, pages 1860--1865, Copenhagen, Denmark.
  Association for Computational Linguistics.

\bibitem[{Mikolov et~al.(2013)Mikolov, Sutskever, Chen, Corrado, and
  Dean}]{DBLP:journals/corr/MikolovSCCD13}
Tomas Mikolov, Ilya Sutskever, Kai Chen, Greg~S Corrado, and Jeff Dean. 2013.
\newblock \href
  {https://proceedings.neurips.cc/paper/2013/file/9aa42b31882ec039965f3c4923ce901b-Paper.pdf}
  {Distributed representations of words and phrases and their
  compositionality}.
\newblock In \emph{Advances in Neural Information Processing Systems},
  volume~26. Curran Associates, Inc.

\bibitem[{Moon et~al.(2019)Moon, Shah, Kumar, and
  Subba}]{moon-etal-2019-opendialkg}
Seungwhan Moon, Pararth Shah, Anuj Kumar, and Rajen Subba. 2019.
\newblock \href {https://doi.org/10.18653/v1/P19-1081} {{O}pen{D}ial{KG}:
  Explainable conversational reasoning with attention-based walks over
  knowledge graphs}.
\newblock In \emph{Proceedings of the 57th Annual Meeting of the Association
  for Computational Linguistics}, pages 845--854, Florence, Italy. Association
  for Computational Linguistics.

\bibitem[{{Painsky} and {Wornell}(2020)}]{8930624}
A.~{Painsky} and G.~W. {Wornell}. 2020.
\newblock \href {https://doi.org/10.1109/TIT.2019.2958705} {Bregman divergence
  bounds and universality properties of the logarithmic loss}.
\newblock \emph{IEEE Transactions on Information Theory}, 66(3):1658--1673.

\bibitem[{Patrini et~al.(2017)Patrini, Rozza, Krishna~Menon, Nock, and
  Qu}]{patrini2016making}
Giorgio Patrini, Alessandro Rozza, Aditya Krishna~Menon, Richard Nock, and
  Lizhen Qu. 2017.
\newblock Making deep neural networks robust to label noise: A loss correction
  approach.
\newblock In \emph{Proceedings of the IEEE Conference on Computer Vision and
  Pattern Recognition (CVPR)}.

\bibitem[{Pereyra et~al.(2017)Pereyra, Tucker, Chorowski, Kaiser, and
  Hinton}]{DBLP:journals/corr/PereyraTCKH17}
Gabriel Pereyra, George Tucker, Jan Chorowski, Lukasz Kaiser, and Geoffrey~E.
  Hinton. 2017.
\newblock \href {http://arxiv.org/abs/1701.06548} {Regularizing neural networks
  by penalizing confident output distributions}.
\newblock \emph{CoRR}, abs/1701.06548.

\bibitem[{Ruffinelli et~al.(2020)Ruffinelli, Broscheit, and
  Gemulla}]{Ruffinelli2020You}
Daniel Ruffinelli, Samuel Broscheit, and Rainer Gemulla. 2020.
\newblock \href {https://openreview.net/forum?id=BkxSmlBFvr} {You can teach an
  old dog new tricks! on training knowledge graph embeddings}.
\newblock In \emph{International Conference on Learning Representations}.

\bibitem[{Sun et~al.(2019)Sun, Deng, Nie, and
  Tang}]{DBLP:journals/corr/abs-1902-10197}
Zhiqing Sun, Zhi-Hong Deng, Jian-Yun Nie, and Jian Tang. 2019.
\newblock \href {https://openreview.net/forum?id=HkgEQnRqYQ} {Rotate: Knowledge
  graph embedding by relational rotation in complex space}.
\newblock In \emph{International Conference on Learning Representations}.

\bibitem[{Szegedy et~al.(2016)Szegedy, Vanhoucke, Ioffe, Shlens, and
  Wojna}]{label1}
Christian Szegedy, Vincent Vanhoucke, Sergey Ioffe, Jon Shlens, and Zbigniew
  Wojna. 2016.
\newblock Rethinking the inception architecture for computer vision.
\newblock In \emph{Proceedings of the IEEE conference on computer vision and
  pattern recognition}, pages 2818--2826.

\bibitem[{Toutanova and Chen(2015)}]{toutanova-chen-2015-observed}
Kristina Toutanova and Danqi Chen. 2015.
\newblock \href {https://doi.org/10.18653/v1/W15-4007} {Observed versus latent
  features for knowledge base and text inference}.
\newblock In \emph{Proceedings of the 3rd Workshop on Continuous Vector Space
  Models and their Compositionality}, pages 57--66, Beijing, China. Association
  for Computational Linguistics.

\bibitem[{Trouillon et~al.(2016)Trouillon, Welbl, Riedel, Éric Gaussier, and
  Bouchard}]{DBLP:conf/icml/TrouillonWRGB16}
Théo Trouillon, Johannes Welbl, Sebastian Riedel, Éric Gaussier, and
  Guillaume Bouchard. 2016.
\newblock \href {http://proceedings.mlr.press/v48/trouillon16.html} {Complex
  embeddings for simple link prediction}.
\newblock In \emph{ICML}, pages 2071--2080.

\bibitem[{Yang et~al.(2015)Yang, Yih, He, Gao, and Deng}]{yang2015embedding}
Bishan Yang, Wen{-}tau Yih, Xiaodong He, Jianfeng Gao, and Li~Deng. 2015.
\newblock \href {http://arxiv.org/abs/1412.6575} {Embedding entities and
  relations for learning and inference in knowledge bases}.
\newblock In \emph{3rd International Conference on Learning Representations,
  {ICLR} 2015, San Diego, CA, USA, May 7-9, 2015, Conference Track
  Proceedings}.

\bibitem[{Yang et~al.(2020)Yang, Ding, Zhou, Yang, Zhou, and
  Tang}]{10.1145/3394486.3403218}
Zhen Yang, Ming Ding, Chang Zhou, Hongxia Yang, Jingren Zhou, and Jie Tang.
  2020.
\newblock \href {https://doi.org/10.1145/3394486.3403218} {Understanding
  negative sampling in graph representation learning}.
\newblock In \emph{Proceedings of the 26th ACM SIGKDD International Conference
  on Knowledge Discovery and Data Mining}, KDD '20, page 1666–1676, New York,
  NY, USA. Association for Computing Machinery.

\end{thebibliography}
\bibliographystyle{acl_natbib}

\appendix
\onecolumn
\section{Proof of Proposition 1, 2, and 3}
\label{breg:ns:details}
We can reformulate $\ell_{NS}$ as follows:
\begin{align}
    \ell^{NS}(\theta)&=-\frac{1}{|D|}
    \sum_{(x,y) \in D}\left(\log(P(C=1,y|x;\theta))+\sum_{i=1,y_{i}\sim p_n}^{\nu}\log(P(C=0,y_{i}|x;\theta))\right)\nonumber\\
    &=-\frac{1}{|D|}\sum_{(x,y) \in D}\log(P(C=1,y|x;\theta))-\frac{1}{|D|}\sum_{(x,y)\in D}\sum_{i=1,y_{i}\sim p_n}^{\nu}\log(P(C=0,y_{i}|x;\theta))\nonumber\\
    &=-\frac{1}{|D|}\sum_{(x,y) \in D}\log(\frac{1}{1+G(y|x;\theta)})-\frac{1}{ |D|}\sum_{(x,y)\in D}\sum_{i=1,y_{i}\sim p_n}^{\nu}\log(\frac{G(y_{i}|x;\theta)}{1+G(y_{i}|x;\theta)})\nonumber\\
    &=\frac{1}{|D|}\sum_{(x,y)\in D}\log(1+G(y|x;\theta)) + \frac{\nu}{\nu |D|}\sum_{(x,y)\in D}\sum_{i=1,y_{i}\sim p_n}^{\nu}\log(1+\frac{1}{G(y_i|x;\theta)})\nonumber\\
    &=\sum_{x,y} p_{d}(y|x)\log(1+G(y|x;\theta))p_{d}(x) + \sum_{x,y} \nu p_{n}(y|x)\log(1+\frac{1}{G(y|x;\theta)})p_{d}(x)\label{app:eq:nce:int}
\end{align}
Letting $u=(x,y)$, $f(u)=\frac{\nu p_{n}(y|x)}{p_{d}(y|x)}$, $g(u)=G(y|x;\theta)$, and $p_{d}(x)=\frac{1}{p_{d}(y|x)}p_{d}(x,y)$, we can reformulate Eq.~(\ref{app:eq:nce:int}) as:
\begin{align}
    \ell^{NS}(\theta)=&\left(\sum_{x,y} p_{d}(y|x)\log(1+g(u))\frac{1}{p_{d}(y|x)}p_{d}(x,y) + \sum_{x,y} \nu p_{n}(y|x)\log(1+\frac{1}{g(u)})\frac{1}{p_{d}(y|x)}p_{d}(x,y)\right)\nonumber\\
    =&\sum_{x,y} \left[\log(1+g(u)) + \log(1+\frac{1}{g(u)})f(u)\right]p_{d}(x,y)\nonumber\\
    =&\sum_{x,y} \Bigl[\log(1+g(u)) - \log(g(u))f(u) + \log(1+g(u))f(u)\Bigr]p_{d}(x,y)\nonumber\\
    =&\sum_{x,y} \Bigl [-g(u)\log(1+g(u)) + (1+g(u))\log(1+g(u))\Bigr.\nonumber\\
    & + \log(g(u))g(u)  + \log(1+g(u))g(u) - \log(g(u))f(u) + \log(1+g(u))f(u)\Bigl.\Bigr]p_{d}(x,y)\label{app:eq:nce:exp}
\end{align}
With $\Psi(g(u))=g(u)\log(g(u))-(1+g(u))\log(1+g(u))$ and $\nabla\Psi(g(u))=\log(g(u))-\log(1+g(u))$, we can reformulate Eq.~(\ref{app:eq:nce:exp}) as:
\begin{align}
    \ell^{NS}(\theta)=& \sum_{x,y}\Bigl[-\Psi(g(u))+\nabla\Psi(g(u))g(u)-\nabla\Psi(g(u))f(u)\Bigr]p_{d}(x,y)\nonumber\\
    =& \tilde{B}_{\Psi(z)}(f(u),g(u)).
    \label{app:eq:ns:breg}
\end{align}
From Eq.~(\ref{app:eq:ns:breg}), when $\ell^{NS}(\theta)$ is minimized, $g(u)=f(u)$ is satisfied. In this condition, $G(y|x;\theta)$ becomes $\frac{\nu p_{n}(y|x)}{p_{d}(y|x)}$, and $\exp(f_{\theta}(x,y))$ becomes $\frac{p_{d}(y|x)}{\nu p_{n}(y|x)}$ as follows:
\begin{equation}
    g(u)=f(u) \Leftrightarrow G(y|x;\theta)=\frac{\nu p_{n}(y|x)}{p_{d}(y|x)} \Leftrightarrow \exp(f_{\theta}(x,y)) = \frac{p_{d}(y|x)}{\nu p_{n}(y|x)}\label{app:eq:ns:obj-true}.
\end{equation}
Based on the Eq.~(\ref{eq:softmax}) and  Eq.~(\ref{app:eq:ns:obj-true}), the objective distribution for $p_{\theta}(y|x)$ is as follows:
\begin{align}
    p_{\theta}(y|x) = \frac{p_{d}(y|x)}{p_{n}(y|x)\sum\limits_{y_{i}\in Y}\frac{p_{d}(y_i|x)}{p_{n}(y_i|x)}}\label{app:eq:ns:softmax}.
\end{align}

\section{Proof of Proposition 4}
\label{app:proof:pmi}
PMI is induced by multiplying $p_{d}(x)$ to the right-hand side of Eq.~(\ref{eq:ns:obj-true}) and then computing logarithm for both sides as follows:
\begin{equation}
G(y|x;\theta)=\frac{p_{n}(y|x)}{p_{d}(y|x)} \Leftrightarrow
     \exp(f_{\theta}(x,y)) = \frac{p_{d}(y|x)}{p_{n}(y|x)} = \frac{p_{d}(y|x)}{p_{d}(y)} = \frac{p_{d}(x,y)}{p_{d}(x)p_{d}(y)} \Leftrightarrow f_{\theta}(x,y) = \log\frac{p_{d}(x,y)}{p_{d}(y)p_{d}(y)}
\end{equation}

\section{Proof of Proposition 5}
\label{app:proof:uninoise}
When $p_{n}(y|x)$ is a uniform distribution, $p_{n}(y|x)\sum\limits_{y_{i}\in Y}\frac{p_{d}(y_i|x)}{p_{n}(y_i|x)}=\sum\limits_{y_{i}\in Y}p_{d}(y_i|x)=1$, and thus, Eq.~(\ref{eq:ns:softmax}) becomes $p_{d}(y|x)$.

\section{Experimental Details}
\label{app:params}

\noindent\textbf{Dataset}: We use FB15k-237~\cite{toutanova-chen-2015-observed}\footnote{\url{https://www.microsoft.com/en-us/download/confirmation.aspx?id=52312}} and WN18RR~\cite{dettmers2018conve}\footnote{\url{https://github.com/TimDettmers/ConvE}} datasets in the experiments. We followed the standard split in the original papers for each dataset. Table \ref{tab:app:data} lists the statistics for each dataset.
\begin{table}[h!]
    \centering
    \small
    \begin{tabular}{llllll}
    \toprule
    \multirow{2}{*}{Dataset}&\multirow{2}{*}{Entities}&\multirow{2}{*}{Relations}&\multicolumn{3}{c}{Tuples}\\
    \cmidrule(lr){4-6}
 &  &  & Train & Valid & Test\\
    \midrule
WN18RR &40,943 &11 &86,835 &3,034 &3,134\\
FB15k-237 &14,541 &237 &272,115 &17,535& 20,466  \\
\bottomrule
    \end{tabular}
    \caption{The numbers of each instance for each dataset.}
    \label{tab:app:data}
\end{table}

\noindent\textbf{Metric}: We evaluated the link prediction performance of models with MRR, Hits@1, Hits@3, and Hits@10 by ranking test triples against all other triples not appeared in the training, valid, and test datasets. We used LibKGE for calculating these metric scores.

\noindent\textbf{Model}: We compared the following models:
TuckER~\cite{balazevic-etal-2019-tucker}; RESCAL~\cite{10.5555/2900423.2900470}; ComplEx~\cite{DBLP:conf/icml/TrouillonWRGB16}; DistMult~\cite{yang2015embedding}; TransE~\cite{NIPS2013_1cecc7a7}; RotatE~\cite{DBLP:journals/corr/abs-1902-10197}.
For each model, we also trained a model for the reverse direction that shares the entity embeddings with the model for the forward direction. Thus, the dimension size of subject and object embeddings are the same in all models.

\noindent\textbf{Implementation}: We used LibKGE~\cite{libkge}\footnote{\url{https://github.com/uma-pi1/kge}} as the implementation.
We used its 1vsAll setting for SCE-based loss functions and negative sampling setting for NS-based loss functions.
We modified LibKGE to be able to use label smoothing on the 1vsAll setting.
We also incorporated NS w/ Freq and SCE w/ BC into the implementation.

\noindent\textbf{Hyper-parameter}: Table \ref{tab:hp:fb15k-237} and \ref{tab:hp:wn18rr} show the hyper-parameter settings of each method for each dataset.
In RESCAL, ComplEx, and DistMult we used the settings that achieved the highest performance for each loss function in the previous study \cite{Ruffinelli2020You}\footnote{\url{https://github.com/uma-pi1/kge-iclr20}}.
In TuckER and RotatE, we follow the settings from the original paper. When applying SANS, we set $\alpha$ to an initial value of 1.0 for LibKGE for all models except TransE and RotatE, and for TransE and RotatE, where we followed the settings of the original paper of SANS since SANS was used in it.
When applying SCE w/ LS, we set $\lambda$ to the initial value of LibKGE, 0.3, except on TransE and RotatE.
In the original setting of TransE and RotatE, because the value of SANS was tuned for comparison, for fairness, we selected $\lambda$ from \{0.3, 0.1, 0.01\} by using the development data through a single run for each value.
We set the maximum epoch to 800.
We calculated MRR every five epochs on the developed data, and the training was terminated when the highest value was not updated ten times.
We chose the best model by using the MRR score on the development data. 
These hyperparameters were also used in the pre-training step.

\begin{table}[h!]
\resizebox{\columnwidth}{!}{
\begin{tabular}{llllllllllllllllll}
\toprule
 &
   &
  \multicolumn{16}{c}{\textbf{FB15k-237}} \\
  \midrule
\multicolumn{2}{c}{\multirow{2}{*}{Model}} &
  \multirow{2}{*}{Batch} &
  \multirow{2}{*}{Dim} &
  \multirow{2}{*}{Initialize} &
  \multicolumn{3}{l}{Regularize} &
  \multicolumn{2}{l}{Dropout} &
  \multicolumn{4}{l}{Optimizer} &
  \multicolumn{2}{l}{Sample} &
  \multirow{2}{*}{$\lambda$} &
  \multirow{2}{*}{$\alpha$} \\
  \cmidrule(lr){6-8}\cmidrule(lr){9-10}\cmidrule(lr){11-14}\cmidrule(lr){15-16}
\multicolumn{2}{l}{} &
   &
   &
   &
  Type &
  Entity &
  Relation &
  Entity &
  Rel. &
  Type &
  LR &
  Decay &
  P. &
  sub. &
  obj. &
   &
   \\
   \midrule
\multirow{4}{*}{TuckER} &
  SCE &
  128 &
  200 &
  xn: 1.0 &
  - &
  - &
  - &
  0.3 &
  0.4 &
  Adam &
  0.0005 &
  - &
  - &
  All &
  All &
  - &
  - \\
 &
  SCE w/ LS &
  128 &
  200 &
  xn: 1.0 &
  - &
  - &
  - &
  0.3 &
  0.4 &
  Adam &
  0.0005 &
  - &
  - &
  All &
  All &
  0.3 &
  - \\
 &
  NS &
  128 &
  200 &
  xn: 1.0 &
  - &
  - &
  - &
  0.3 &
  0.4 &
  Adam &
  0.0005 &
  - &
  - &
  All &
  All &
  - &
  - \\
 &
  SANS &
  128 &
  200 &
  xn: 1.0 &
  - &
  - &
  - &
  0.3 &
  0.4 &
  Adam &
  0.0005 &
  - &
  - &
  All &
  All &
  - &
  1.0 \\
  \midrule
\multirow{4}{*}{Rescal} &
  SCE &
  512 &
  128 &
  n: 0.123 &
  - &
  - &
  - &
  0.427 &
  0.159 &
  Adam &
  7.39E-5 &
  0.95 &
  1 &
  All &
  All &
  - &
  - \\
 &
  SCE w/ LS &
  512 &
  128 &
  n: 0.123 &
  - &
  - &
  - &
  0.427 &
  0.159 &
  Adam &
  7.39E-5 &
  0.95 &
  1 &
  All &
  All &
  0.3 &
  - \\
 &
  NS &
  256 &
  128 &
  xn: 1.0 &
  lp: 3 &
  1.22E-12 &
  4.80E-14 &
  0.347 &
  - &
  Adagrad &
  0.0170 &
  0.95 &
  5 &
  22 &
  155 &
  - &
  - \\
 &
  SANS &
  256 &
  128 &
  xn: 1.0 &
  lp: 3 &
  1.22E-12 &
  4.80E-14 &
  0.347 &
  - &
  Adagrad &
  0.0170 &
  0.95 &
  5 &
  22 &
  155 &
  - &
  1.0 \\
  \midrule
\multirow{4}{*}{ComlEx} &
  SCE &
  512 &
  128 &
  u: 0.311 &
  - &
  - &
  - &
  0.0476 &
  0.443 &
  Adagrad &
  0.503 &
  0.95 &
  7 &
  All &
  All &
  - &
  - \\
 &
  SCE w/ LS &
  512 &
  128 &
  u: 0.311 &
  - &
  - &
  - &
  0.0476 &
  0.443 &
  Adagrad &
  0.503 &
  0.95 &
  7 &
  All &
  All &
  0.3 &
  - \\
 &
  NS &
  512 &
  256 &
  n: 4.81E-5 &
  lp: 2 &
  6.34E-9 &
  9.08E-18 &
  0.182 &
  0.0437 &
  Adagrad &
  0.241 &
  0.95 &
  4 &
  1 &
  48 &
  - &
  - \\
 &
  SANS &
  512 &
  256 &
  n: 4.81E-5 &
  lp: 2 &
  6.34E-9 &
  9.08E-18 &
  0.182 &
  0.0437 &
  Adagrad &
  0.241 &
  0.95 &
  4 &
  1 &
  48 &
  - &
  1.0 \\
  \midrule
\multirow{4}{*}{DistMult} &
  SCE &
  512 &
  128 &
  n: 0.806 &
  - &
  - &
  - &
  0.370 &
  0.280 &
  Adam &
  0.00063 &
  0.95 &
  1 &
  All &
  All &
  - &
  - \\
 &
  SCE &
  512 &
  128 &
  n: 0.806 &
  - &
  - &
  - &
  0.370 &
  0.280 &
  Adam &
  0.00063 &
  0.95 &
  1 &
  All &
  All &
  0.3 &
  - \\
 &
  NS &
  1024 &
  256 &
  u: 0.848 &
  lp: 3 &
  1.55E-10 &
  3.93E-15 &
  0.455 &
  0.360 &
  Adagrad &
  0.141 &
  0.95 &
  9 &
  557 &
  367 &
  - &
  - \\
 &
  SANS &
  1024 &
  256 &
  u: 0.848 &
  lp: 3 &
  1.55E-10 &
  3.93E-15 &
  0.455 &
  0.360 &
  Adagrad &
  0.141 &
  0.95 &
  9 &
  557 &
  367 &
  - &
  1.0 \\
  \midrule
\multirow{4}{*}{TransE} &
  SCE &
  128 &
  128 &
  u: 1.0E-5 &
  - &
  - &
  - &
  - &
  - &
  Adam &
  0.0003 &
  0.95 &
  5 &
  All &
  All &
  - &
  - \\
 &
  SCE w/ LS &
  128 &
  128 &
  u: 1.0E-5 &
  - &
  - &
  - &
  - &
  - &
  Adam &
  0.0003 &
  0.95 &
  5 &
  All &
  All &
  0.01 &
  - \\
 &
  NS &
  1024 &
  1000 &
  xu: 1.0 &
  - &
  - &
  - &
  - &
  - &
  Adam &
  0.00005 &
  0.95 &
  5 &
  256 &
  256 &
  - &
  - \\
 &
  SANS &
  1024 &
  1000 &
  xu: 1.0 &
  - &
  - &
  - &
  - &
  - &
  Adam &
  0.00005 &
  0.95 &
  5 &
  256 &
  256 &
  - &
  1.0\\
  \midrule
\multirow{4}{*}{Rotate} &
  SCE &
  1024 &
  1000 &
  xu: 1.0 &
  - &
  - &
  - &
  - &
  - &
  Adam &
  0.00005 &
  0.95 &
  5 &
  All &
  All &
  - &
  - \\
 &
  SCE w/ LS &
  1024 &
  1000 &
  xu: 1.0 &
  - &
  - &
  - &
  - &
  - &
  Adam &
  0.00005 &
  0.95 &
  5 &
  All &
  All &
  0.01 &
  - \\
 &
  NS &
  1024 &
  1000 &
  xu: 1.0 &
  - &
  - &
  - &
  - &
  - &
  Adam &
  0.00005 &
  0.95 &
  5 &
  256 &
  256 &
  - &
  - \\
 &
  SANS &
  1024 &
  1000 &
  xu: 1.0 &
  - &
  - &
  - &
  - &
  - &
  Adam &
  0.00005 &
  0.95 &
  5 &
  256 &
  256 &
  - &
  1.0\\
  \bottomrule
\end{tabular}}
\caption{The hyper-parameters for each model in FB15k-237. Rel. denotes relation, P. denotes patience, sub. denotes subjective, obj. denotes objective, xn denotes xavier normal, n denotes normal, xu denotes xavier uniform, and u denotes uniform.}
\label{tab:hp:fb15k-237}
\end{table}

\begin{table}[h!]
\resizebox{\columnwidth}{!}{
\begin{tabular}{llllllllllllllllll}
\toprule
 &
   &
  \multicolumn{16}{c}{\textbf{WN18RR}} \\
  \midrule
\multicolumn{2}{c}{\multirow{2}{*}{Model}} &
  \multirow{2}{*}{Batch} &
  \multirow{2}{*}{Dim} &
  \multirow{2}{*}{Initialize} &
  \multicolumn{3}{l}{Regularize} &
  \multicolumn{2}{l}{Dropout} &
  \multicolumn{4}{l}{Optimizer} &
  \multicolumn{2}{l}{Sample} &
  \multirow{2}{*}{$\lambda$} &
  \multirow{2}{*}{$\alpha$} \\
  \cmidrule(lr){6-8}\cmidrule(lr){9-10}\cmidrule(lr){11-14}\cmidrule(lr){15-16}
\multicolumn{2}{l}{} &
   &
   &
   &
  Type &
  Entity &
  Relation &
  Entity &
  Rel. &
  Type &
  LR &
  Decay &
  P. &
  sub. &
  obj. &
   &
   \\
   \midrule
\multirow{4}{*}{TuckER} &
  SCE &
  128 &
  200 &
  xn: 1.0 &
  - &
  - &
  - &
  0.2 &
  0.2 &
  Adam &
  0.0005 &
  - &
  - &
  All &
  All &
  - &
  - \\
 &
  SCE w/ LS &
  128 &
  200 &
  xn: 1.0 &
  - &
  - &
  - &
  0.2 &
  0.2 &
  Adam &
  0.0005 &
  - &
  - &
  All &
  All &
  0.3 &
  - \\
 &
  NS &
  128 &
  200 &
  xn: 1.0 &
  - &
  - &
  - &
  0.2 &
  0.2 &
  Adam &
  0.0005 &
  - &
  - &
  All &
  All &
  - &
  - \\
 &
  SANS &
  128 &
  200 &
  xn: 1.0 &
  - &
  - &
  - &
  0.2 &
  0.2 &
  Adam &
  0.0005 &
  - &
  - &
  All &
  All &
  - &
  1.0 \\
  \midrule
\multirow{4}{*}{Rescal} &
  SCE &
  512 &
  256 &
  xn: 1.0 &
  - &
  - &
  - &
  - &
  - &
  Adam &
  0.00246 &
  0.95 &
  9 &
  All &
  All &
  - &
  - \\
 &
  SCE w/ LS &
  512 &
  256 &
  xn: 1.0 &
  - &
  - &
  - &
  - &
  - &
  Adam &
  0.00246 &
  0.95 &
  9 &
  All &
  All &
  0.3 &
  - \\
 &
  NS &
  512 &
  128 &
  n: 1.64E-4 &
  - &
  - &
  - &
  - &
  - &
  Adam &
  0.00152 &
  0.95 &
  1 &
  6 &
  8 &
  - &
  - \\
 &
  SANS &
  512 &
  128 &
  n: 1.64E-4 &
  - &
  - &
  - &
  - &
  - &
  Adam &
  0.00152 &
  0.95 &
  1 &
  6 &
  8 &
  - &
  1.0 \\
  \midrule
\multirow{4}{*}{ComlEx} &
  SCE &
  512 &
  128 &
  u: 0.281 &
  lp: 2 &
  4.52E-6 &
  4.19E-10 &
  0.359 &
  0.311 &
  Adagrad &
  0.526 &
  0.95 &
  5 &
  All &
  All &
  - &
  - \\
 &
  SCE w/ LS &
  512 &
  128 &
  u: 0.281 &
  lp: 2 &
  4.52E-6 &
  4.19E-10 &
  0.359 &
  0.311 &
  Adagrad &
  0.526 &
  0.95 &
  5 &
  All &
  All &
  0.3 &
  - \\
 &
  NS &
  1024 &
  128 &
  xn: 1.0 &
  - &
  - &
  - &
  0.0466 &
  0.0826 &
  Adam &
  3.32E-5 &
  0.95 &
  7 &
  6 &
  6 &
  - &
  - \\
 &
  SANS &
  1024 &
  128 &
  xn: 1.0 &
  - &
  - &
  - &
  0.0466 &
  0.0826 &
  Adam &
  3.32E-5 &
  0.95 &
  7 &
  6 &
  6 &
  - &
  1.0 \\
  \midrule
\multirow{4}{*}{DistMult} &
  SCE &
  512 &
  128 &
  u: 0.311 &
  lp: 2 &
  1.44E-18 &
  1.44E-18 &
  0.0476 &
  0.443 &
  Adagrad &
  0.503 &
  0.95 &
  7 &
  All &
  All &
  - &
  - \\
 &
  SCE w/ LS &
  512 &
  128 &
  u: 0.311 &
  lp: 2 &
  1.44E-18 &
  1.44E-18 &
  0.0476 &
  0.443 &
  Adagrad &
  0.503 &
  0.95 &
  7 &
  All &
  All &
  0.3 &
  - \\
 &
  NS &
  1024 &
  128 &
  xn: 1.0 &
  - &
  - &
  - &
  0.0466 &
  0.0826 &
  Adam &
  3.32E-5 &
  0.95 &
  7 &
  6 &
  6 &
  - &
  - \\
 &
  SANS &
  1024 &
  128 &
  xn: 1.0 &
  - &
  - &
  - &
  0.0466 &
  0.0826 &
  Adam &
  3.32E-5 &
  0.95 &
  7 &
  6 &
  6 &
  - &
  1.0 \\
  \midrule
\multirow{4}{*}{TransE} &
  SCE &
  128 &
  512 &
  xn: 1.0 &
  lp: 2 &
  2.13E-7 &
  8.99E-13 &
  0.252 &
  - &
  Adagrad &
  0.253 &
  0.95 &
  5 &
  All &
  All &
  - &
  - \\
 &
  SCE w/ LS &
  128 &
  512 &
  xn: 1.0 &
  lp: 2 &
  2.13E-7 &
  8.99E-13 &
  0.252 &
  - &
  Adagrad &
  0.253 &
  0.95 &
  5 &
  All &
  All &
  0.01 &
  - \\
 &
  NS &
  512 &
  500 &
  xu: 1.0 &
  - &
  - &
  - &
  - &
  - &
  Adam &
  0.00005 &
  0.95 &
  5 &
  1024 &
  1024 &
  - &
  - \\
 &
  SANS &
  512 &
  500 &
  xu: 1.0 &
  - &
  - &
  - &
  - &
  - &
  Adam &
  0.00005 &
  0.95 &
  5 &
  1024 &
  1024 &
  - &
  0.5\\
  \midrule
\multirow{4}{*}{Rotate} &
  SCE &
  512 &
  500 &
  xu: 1.0 &
  - &
  - &
  - &
  - &
  - &
  Adam &
  0.00005 &
  0.95 &
  5 &
  All &
  All &
  - &
  - \\
 &
  SCE w/ LS &
  512 &
  500 &
  xu: 1.0 &
  - &
  - &
  - &
  - &
  - &
  Adam &
  0.00005 &
  0.95 &
  5 &
  All &
  All &
  0.01 &
  - \\
 &
  NS &
  512 &
  500 &
  xu: 1.0 &
  - &
  - &
  - &
  - &
  - &
  Adam &
  0.00005 &
  0.95 &
  5 &
  1024 &
  1024 &
  - &
  - \\
 &
  SANS &
  512 &
  500 &
  xu: 1.0 &
  - &
  - &
  - &
  - &
  - &
  Adam &
  0.00005 &
  0.95 &
  5 &
  1024 &
  1024 &
  - &
  0.5\\
  \bottomrule
\end{tabular}}
\caption{The hyper-parameters for each model in WN18RR. The notations are the same as Table \ref{tab:hp:fb15k-237}.}
\label{tab:hp:wn18rr}
\end{table}

\noindent\textbf{Validation Score} Table \ref{tab:valid}, \ref{tab:valid:pretraining}, and \ref{tab:valid:pretrained} show the best MRR scores of each loss for each model on the validation dataset.

\begin{table*}[h!]
\centering
\small
\begin{tabular}{llcc}
\toprule
             \textbf{Model}       &  \textbf{Loss} & \textbf{FB15k-237} & \textbf{WN18RR} \\
\midrule
\multirow{4}{*}{TuckER}  & SCE       & 0.345     & 0.451  \\
                         & SCE w/ LS & 0.350     & 0.470  \\
                         & NS        & 0.261     & 0.433  \\
                         & SANS      & 0.337     & 0.441  \\
\midrule
\multirow{4}{*}{RESCAL}  & SCE       & 0.359     & 0.461  \\
                         & SCE w/ LS & 0.369     & 0.474  \\
                         & NS        & 0.344     & 0.389  \\
                         & SANS      & 0.344     & 0.390  \\
\midrule
\multirow{4}{*}{ComplEx} & SCE       & 0.304     & 0.468  \\
                         & SCE w/ LS & 0.324     & 0.478  \\
                         & NS        & 0.302     & 0.399  \\
                         & SANS      & 0.308     & 0.433  \\
\midrule
\multirow{4}{*}{DistMult} & SCE      & 0.350    & 0.441  \\
                         & SCE w/ LS & 0.351    & 0.451  \\
                         & NS        & 0.308    & 0.391  \\
                         & SANS      & 0.326    & 0.412  \\
\midrule
\multirow{4}{*}{TransE} & SCE       & 0.328 & 0.227 \\
                         & SCE w/ LS & 0.322 & 0.220 \\
                         & NS        & 0.289 & 0.216 \\
                         & SANS      & 0.333 & 0.218 \\
\midrule
\multirow{4}{*}{RotatE}  & SCE       & 0.320     & 0.452  \\
                         & SCE w/ LS & 0.320     & 0.449  \\
                         & NS        & 0.306     & 0.472  \\
                         & SANS      & 0.340     & 0.475  \\
\bottomrule
\end{tabular}
\caption{The best MRR scores on validation data.}
\label{tab:valid}
\end{table*}

\begin{table}[h]
\centering
\small
\begin{tabular}{lll}
\toprule
Dataset                    & Mehotd            & MRR   \\
\midrule
\multirow{2}{*}{FB15k-237} & RESCAL+SCE w/BC   & 0.149 \\
                           & RESCAL+NS w/ Freq & 0.171 \\
\midrule
\multirow{2}{*}{WN18RR}    & ComplEx+SCE w/ BC & 0.361 \\
                           & RotatE+NS w/ Freq & 0.469 \\
\bottomrule
\end{tabular}
\caption{The best MRR scores of pre-trained models on validation data.}
\label{tab:valid:pretraining}
\end{table}

\begin{table}[h]
\centering
\small
\begin{tabular}{lll}
\toprule
\multicolumn{3}{c}{\textbf{FB15k-237}}                           \\
\midrule
Method                             & Pretrain   & MRR   \\
\midrule
\multirow{3}{*}{RESCAL+SCE w / LS} & SCE        & 0.369 \\
                                   & SCE w/ BC  & 0.369 \\
                                   & SCE w/ LS  & 0.371 \\
\midrule
\multirow{3}{*}{RESCAL+SANS}       & NS         & 0.349 \\
                                   & NS w/ Freq & 0.348 \\
                                   & SANS       & 0.350 \\
\midrule
\multicolumn{3}{c}{\textbf{WN18RR}}                              \\
\midrule
Method                             & Pretrain   & MRR   \\
\midrule
\multirow{3}{*}{ComplEx+SCE w/ LS} & SCE        & 0.483 \\
                                   & SCE w/ BC  & 0.469 \\
                                   & SCE w/ LS  & 0.481 \\
\midrule
\multirow{3}{*}{RotatE+SANS}       & NS         & 0.472 \\
                                   & NS w/ Freq & 0.474 \\
                                   & SANS       & 0.475 \\
\bottomrule
\end{tabular}
\caption{The best MRR scores of models initialized with pre-trained embeddings on validation data.}
\label{tab:valid:pretrained}
\end{table}

\noindent\textbf{Device}: In all models, we used a single NVIDIA RTX2080Ti for training. Except for RotetE with SCE-based loss functions, all models finished the training in one day.
The RotetE with SCE-based loss function finished the training in at most one week.

\clearpage
\section{Note: the divergence between the NS and SCE loss functions}
\label{app:proof:bound}
\citet{8930624} proved that the upper bound of the Bregman divergence for binary labels when $\Psi(\mathbf{z})=\sum_{i=1}^{len(\mathbf{z})}z_{i}\log{z_{i}}$. However, to compare the SCE and NS loss functions in general, we need to consider the divergence of multi labels in SCE.
When $\Psi(\mathbf{z})=\sum_{i=1}^{len(\mathbf{z})}z_{i}\log{z_{i}}$, we can derive the following inequality by using the log sum inequality:
\begin{align}
     &d_{\Psi(\mathbf{z})}(p_{d}(\mathbf{y}|x),p_{\theta}(\mathbf{y}|x)) \nonumber\\
     =& \sum_{i=1}^{|Y|} p_{d}(y_{i}|x)\log\frac{p_{d}(y_{i}|x)}{p_{\theta}(y_{i}|x)} \nonumber\\
     =& p_{d}(y_{j}|x)\log\frac{p_{d}(y_{j}|x)}{p_{\theta}(y_{j}|x)} + \sum_{i \not = j}^{|Y|} p_{d}(y_{i}|x)\log\frac{p_{d}(y_{i}|x)}{p_{\theta}(y_{i}|x)}\nonumber\\
     \geqq & p_{d}(y_{j}|x)\log\frac{p_{d}(y_{j}|x)}{p_{\theta}(y_{j}|x)}  + (\sum_{i \not = j}^{|Y|} p_{d}(y_{i}|x))\log\frac{(\sum_{i \not = j}^{|Y|} p_{d}(y_{i}|x))}{(\sum_{i \not = j}^{|Y|} p_{\theta}(y_{i}|x))} \nonumber\\
     =& p_{d}(y_{j}|x)\log\frac{p_{d}(y_{j}|x)}{p_{\theta}(y_{j}|x)} + (1- p_{d}(y_{j}|x))\log\frac{(1 - p_{d}(y_{j}|x))}{(1 - p_{\theta}(y_{j}|x))}.
     \label{eq:div:sce:multi}
\end{align}
Eq. (\ref{eq:div:sce:multi}) shows that the divergence of multi labels is larger than that of binary labels in SCE.
As we explained, $d_{\Psi(\mathbf{z})}(f,g)$ of SCE is larger than $d_{\Psi(z)}(f,g)$ of NS in binary labels.
Therefore, the SCE loss imposes a larger penalty on the same predicted value than the NS loss when the value of the learning target is the same between the two losses.

\end{document}